\newtheorem{theorem}{Theorem}
\newtheorem{cor}{Corollary}
\newtheorem{lemma}{Lemma}
\newcommand{\vect}[1]{\boldsymbol{#1}}
\newcommand{\blue}[1]{\textcolor{blue}{#1}}
\begin{document}
%
\title{\blue{Probabilistic network topology prediction for active planning:} An adaptive algorithm and application}
%
%
%

\author{Liang~Zhang, 
        Zexu~Zhang, 
        Roland~Siegwart,~\IEEEmembership{Fellow,~IEEE,}
        Jen~Jen~Chung,~\IEEEmembership{Member,~IEEE}
\thanks{Corresponding author: Zexu Zhang}
\thanks{Liang Zhang is with the School of Engineering and Automation, Anhui University, Hefei 230601, China and with with the Deep Space Exploration and Research Center, School of Astronautics, Harbin Institute of Technology, Harbin 150001, China. Liang Zhang is also a visiting student in Autonomous System Lab, ETH Z{\" u}rich 8092, Switzerland when developing this work  {\tt\small liangzhang@ahu.edu.cn}. }  

\thanks{Zexu Zhang is with the Deep Space Exploration and Research Center, School of Astronautics, Harbin Institute of Technology, Harbin 150001, China.  Zexu Zhang is also with the Shaanxi Key Laboratory of Integrated and Intelligent Navigation. {\tt\small zexuzhang@hit.edu.cn}}
\thanks{ Jen Jen Chung and Roland Siegwart are with the Autonomous Systems Lab, ETH Z{\" u}rich, Z{\"u}rich 8092, Switzerland. {\tt\small\{rsiegwart;chungj\}@ethz.ch}.}
\thanks{Research supported in part by the Basic Research Strengthening Program of China(173 Program) (2020-JCJQ-ZD-015-00), the National Natural Science Foundation of China under Granted 61374213 and 61573247, and the Open Foundation
from Shanxi Key Laboratory of Integrated and Intelligent Navigation (SKLIIN-20180208). Part of this work is also supported by the scholarship from the China Scholarship Council.}
}

\markboth{Journal of \LaTeX\ Class Files,~Vol.~14, No.~8, August~2015}%
{Shell \MakeLowercase{\textit{et al.}}: Bare Demo of IEEEtran.cls for IEEE Journals}


\maketitle

\thispagestyle{firstpage}

\begin{abstract}
This paper tackles the problem of active planning to achieve cooperative localization for multi-robot systems (MRS) under measurement uncertainty in GNSS-limited scenarios. Specifically, we address the issue of accurately predicting the probability of a future connection between two robots equipped with range-based measurement devices. Due to the limited range of the equipped sensors, edges in the network connection topology will be created or destroyed as the robots move with respect to one another. Accurately predicting the future existence of an edge, given imperfect state estimation and noisy actuation, is therefore a challenging task.
An adaptive power series expansion (or APSE) algorithm is developed based on current estimates and control candidates. Such an algorithm applies the power series expansion formula of the quadratic positive form in a normal distribution. Finite-term approximation is made to realize the computational tractability. Further analyses are presented to show that the truncation error in the finite-term approximation can be theoretically reduced to a desired threshold by adaptively choosing the summation degree of the power series. Several sufficient conditions are rigorously derived as the selection principles. Finally, extensive simulation results and comparisons, with respect to both single and multi-robot cases, validate that a formally computed and therefore more accurate probability of future topology can help improve the performance of active planning under uncertainty.
\end{abstract}

\begin{IEEEkeywords}
Active Planning under Uncertainty, GNSS-limited Environment, Belief Space, Probabilistic network topology, Cooperative Localization, Disk Communication Model
\end{IEEEkeywords}

%
\IEEEpeerreviewmaketitle

\section{Introduction}
%
%
%
%
\IEEEPARstart{A}{ctive} planning under uncertainty plays an important role for robots to autonomously navigate and operate in noisy environments by optimally choosing either systematic configurations or control inputs to minimize (or maximize) an aggregated objective function comprising various requirements from both task and safety needs. It has recently attracted more and more attention in a variety of applications in SLAM \cite{cadena2016past}, sensor deployments \cite{hidaka2005optimal}, surveillance \cite{schlotfeldt2018anytime}, search \& rescue \cite{patwari2005locating} and so on. 

\begin{figure*}[htbp] 
	\centering
	\includegraphics[width=0.8\linewidth]{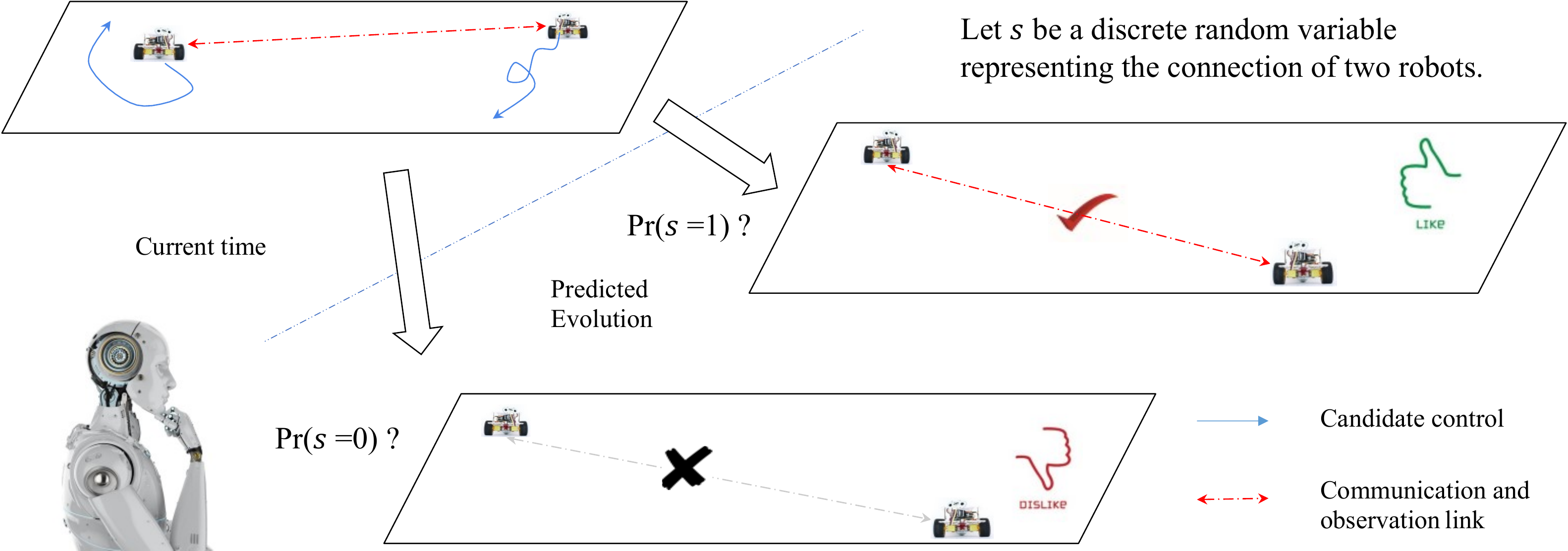}
	\caption{Overall problem description of nondeterministic future network. A connection  indicates a communication and (or) measurement event between the two associated nodes. Given robots' current states and a set of control candidates, the predicted evolution of future system behavior cannot fully determine future network within the planning horizon in a nosiy environment. 
	}
	\vspace{-10pt}
	\label{fig. Task illustration}
\end{figure*}

It is a well-known problem in active planning under uncertainty that the planner cannot perfectly predict future measurements at planning time due to the existence of uncertainties from both motion and measurement processes. As a result, the planner cannot exactly evaluate the objective function given a control candidate. Typically, the measurement-related unknowns during planning include: 

1) The unknown distribution of raw measurement data from perceptive devices if a connection exists, and 

2) A nondeterministic network of future measurements.

While the first is easily understood, the second one arises from the limited capacities of sensors e.g. the range limitation of a radio device or field of view (FOV) of a camera as shown in Fig.~\ref{fig. Task illustration}. Since the future robot states are actually randomly distributed in uncertain scenarios, a measurement event in the future i.e. the establishment of a measurement connection between two future nodes, is a random variable under the presence of limited sensing capacities. Therefore, the network topology of future measurements, which consists of all measurement events during the planning horizon, is nondeterministic at the planning time.

The problem of unknown future measurements during planning has been highlighted and partly relieved in some recent works \cite{platt2010belief,indelman2015planning,pathak2018unified,farhi2019ix,indelman2018cooperative}. However, the state-of-the-art works predict these future unknown variables only by some intuitive or experience-based methods. For example, the maximum likelihood (ML) assumption used in \cite{platt2010belief}, wherein the future raw measurement data is simply determined by the prior estimates and therefore the distribution of future connection is calculated by the Bernoulli model. Besides, \cite{indelman2015planning} theoretically derives the expectation of the objective function over the unknown raw measurement data based on the one-step Gauss-Newton (GN) iteration process. However, the network of future measurement connections is still inherited and computed from the ML assumptions. \cite{pathak2018unified} reasons about the future data association problem in perceptually aliased environments, which is to identify the observed node if a future connection is established. However, once the data association is clear, then the raw measurement value is sampled from the propagated belief and the probability of this future connection is directly derived from a predefined and known Gaussian model. In addition, sampling-based methods and Gaussian models have also been applied in \cite{farhi2019ix,indelman2018cooperative} to approximate these future variables.

The \textbf{key observation} in this work is that the performance of active planning can be improved if we can accurately predict the distributions of these future variables at planning time based on reasonable assumptions. To test such observation, we leave out the calculation of future raw data and concentrate on accurately predicting the distribution of future network connectivity. We mainly focus on a class of range-based communication and observation devices with maximum range threshold. An adaptive power series expansion (APSE) algorithm is developed in this paper to predict the probability of each future measurement connection. 
Both theoretical analysis and numerical results are provided to guarantee the accuracy of APSE. It is then compared with several existing experience-based methods in two active planning scenarios, where robots are deployed to complete high-level tasks while localizing themselves. Numerical results show that an active planning framework using the APSE algorithm achieves more than 50\% reduction over the localization uncertainty
compared to using the Bernoulli model. The main contributions of this note are twofold,

1) APSE: to more accurately compute the exact probability of future connection between two nodes equipped with range-based communication and measurement devices given only current estimates and control candidates. Both theoretical guarantees and statistical validations are presented.

2) Extensive simulation results, showing that a more accurate event likelihood can indeed help improve the active planning performance.

We note that the first contribution is an extension of our previous work presented in \cite{zhang2020connectivity}. Both papers apply the power series expansion formula of the quadratic positive form in a normal distribution from Provost and Mathai's theorem \cite{provost1992quadratic} to predict the probability of a measurement connection, wherein an infinite summation of series is required. In the previous work, we firstly take the finite-term approximation to realize computational tractability and then three more modifications are developed to tackle the truncation error introduced by omitting higher-order terms. As further contributions, this manuscript presents a rigorous theoretical analysis for the truncation error, by which we show that this error can be bounded by an arbitrary threshold if some sufficient conditions hold. These conditions are then summarized into the adaptive principles for the selection of the maximum power degree in the finite-term approximation. Besides, a new expansion method, termed as translational approximate covariance expansion (TRACE) is proposed for the singular cases where the covariance of the relative distance distribution is too small to stabilize the finite-term approximation.

\section{Related Work} 
In this section, works most related to our approaches are discussed. The state of the art of three topics is presented with respect to the developments of belief space planning, connectivity control and cooperative localization. 

\textbf{Belief Space Planning:} Active planning performs sequential online decision making which takes into account all the available information gathered up to the point each new decision is made.

Such problems can be efficiently solved as a Markov Decision Process (MDP) \cite{puterman2014markov} or by Dynamic Programming (DP) methods \cite{denardo2012dynamic}. However, early research mostly only considered systems with deterministic or finite state transition processes, which are feasible to solve in both computational complexity and optimality aspects. 

In contrast, the active planning problem under uncertainty is often formulated under the POMDP framework due to the incorporation of uncertainties arising from both imperfect motion and noisy measurement.
Belief space planning (BSP) methods are exactly an instantiation of a POMDP problem. Instead of planning in a configuration space where knowledge of the mean of the estimated state distribution is sufficient for good performance, BSP optimizes the underlying problem in a continuous \textit{belief space} by additionally taking the uncertainty of the state distribution (or belief, such as the covariance) into consideration. 

Recent developments of BSP mainly focus on approximation methods in order to realize the balance of tractability and run-time complexity. These methods can be generally categorized into discrete and continuous groups according to the type of state space it considers. Discrete-domain planning, including point-based value iteration solvers \cite{shani2013survey,Hsu2008PointTargetTracking} and sampling-based methods \cite{wang2016improved,prentice2009belief,agha2014firm}, perform discretization over the belief space and hence generate a finite set of control candidates from which the optimal strategy is determined according to the maximum probability along each path (or equivalently the minimum uncertainty at the goal). Approaches in continuous spaces, optimized by a linear-quadratic Gaussian framework in belief space \cite{platt2010belief} or gradient descent \cite{indelman2015planning}, derive a locally optimal solution from a given initial plan that is generated from the discretized methods. More recently, some further improvements have contributed to reducing the computational complexity in large-scale deployments by leveraging the similarity between candidate actions for some specific forms of  cost \cite{kopitkov2017no} and the recovery of future posterior covariance \cite{kopitkov2019general}.   

\textbf{Connectivity Control:} The problem of modeling and predicting network connectivity has been widely investigated in the area of wireless communication and \textit{ad hoc} networks \cite{hsieh2008maintaining}. The preservation of connectivity is of great importance for packet routing, resource allocation, and bandwidth management, which have driven interest in the problem of connectivity control within the community. The connectivity of a system is often modeled as a proximity graph and thus the underlying problem is discussed with the help of graph theory. The Fiedler value of a graph, or the second smallest eigenvalue of the Laplacian matrix of a graph, is a concave function of the Laplacian matrix and implies network connectivity when it is positive definite \cite{fiedler1973algebraic}. Therefore, optimization-based connectivity controllers are designed through maximizing or minimizing the Fiedler value in either centralized \cite{kim2005maximizing} or distributed applications \cite{yang2010decentralized,franceschelli2013decentralized}.

Another prevailing methodology are the gradient-based potential field methods which exploit the graph Laplacian matrix to construct a convex potential function and treat the loss of connectivity as obstacles in free space \cite{zavlanos2007potential,zavlanos2011graph}. As the aforementioned theories only consider proximity-based communication models composed of disk-based or uniformly-fading-signal-strength communication links, more complex or realistic configurations have been further investigated to evaluate the effectiveness of multi-path fading, intermittent or recurrent communication constraints, various communication models and so on 
\cite{kantaros2016distributed,banfi2018strategies}.

Besides, recent developments of connectivity research have extended far beyond its basic concept as just a medium of information transmission. The significance of connectivity is no longer only about the routing of information in a network, instead the underlying applications of relative observation as properties of communication links, like the strength of signal (SOS) or time-of-flight (TOF), have been widely introduced for solving navigation and localization problems. 

The differences between the traditional connectivity problem and the measurement event considered in this paper can be distinguished in two aspects. Firstly, despite the great success in connectivity control, most existing approaches rely on perfect knowledge of the sensor state, which is unavailable for real deployments in uncertain environments, like GNSS-limited areas. Here, the future measurement event between two adjacent nodes is stochastic due to the presence of various uncertainties. Secondly and most importantly, the problem of predicting the existence of a future measurement event, which we consider in this paper, only takes the change in system connectivity as an intermediate tool for achieving final objectives, while the traditional connectivity control problem only targets a desired network topology through their designed methods. 

\textbf{Cooperative Localization:} Reliable relative observations enable the promising paradigm of using cooperative localization (CL) to navigate a multi-robot system (MRS) through challenging environments.

In contrast to landmark-based localization methods, e.g. the typical framework of simultaneous localization and mapping (SLAM), CL depends little on the environment representations and hence is of potential advantage to be applied in more critical situations, for example, GNSS-limited areas such as in deep ocean or outer space. 

Early on, the performance of CL has been studied as the problem of sensor deployment through both theoretical analysis
\cite{roumeliotis2003analysis,mourikis2006performance} 
and experimental validation \cite{kurazume2000experimental,trawny2004optimized}. 
Using the centralized EKF as the engine of fusing measurements collected from the network, it has already been shown that, when the absolute position measurement is available i.e. at least one robot can get a GNSS signal or measurements from an anchor, then the CL system is observable and the upper bound of steady-state location uncertainty is constant. The solution quality becomes independent of the initial uncertainty and is only dependent on the topology of the relative position measurement graph, and the accuracy of the proprioceptive and exteroceptive sensors of the robots. Recent research on CL arising from the field of wireless communication takes the exact radio model of the observation process into consideration, simultaneously considering the impacts from multipath propagation, transmitting power, clock delay and so on \cite{shen2010fundamental1,shen2010fundamental2}. Collecting both sensor positions and the parameters of the communication channels into the joint estimated state and based on the Fisher information matrix (FIM), the authors proposed the equivalent FIM (EFIM) by extracting the subset of the FIM corresponding to the position states via the Schur complement operation \cite{cottle1974manifestations}. As a result, the inverse of the trace of the EFIM, which is named the square position error bound (SPEB), can indicate the lower bound of the estimated sensor position error under the current network configuration.

Active planning of both configurations and motion strategies for CL in MRS is more critical and necessary than for the localization of a single robot. The main reason is that relative measurements contribute the only location information source to correct localization uncertainty in MRS. In contrast, there may be several possible options for single robot applications. For example, in active SLAM, different loop closures can be selected and formed by re-observing different landmarks to reduce the uncertainty \cite{cadena2016past}. Research on network optimization includes optimal formation in \cite{hidaka2005optimal}, motion strategies in 
\cite{zhou2011multirobot} 
and sensing frequency in \cite{mourikis2006optimal}. The SPEB indicator-based active operations of the network are investigated in \cite{win2018network} with respect to the problems of node prioritization, sequential node activation, node deployment and power allocation and so on. When uncertainties from both motion and measurement processes are considered, the optimal motion strategies for MRS have been studied in the contexts of target tracking, information gathering \cite{schlotfeldt2018anytime}, active SLAM \cite{cadena2016past}, and autonomous navigation \cite{levine2013information} or coverage \cite{zhong2011distributed}. As a good combination, the BSP framework has been applied to the generation of control inputs of MRS while CL is activated for positioning robots in \cite{regev2016multi,regev2018decentralized}.

\begin{table*}[htbp] 	
	\centering
	\caption{Collection of notations and their definitions.\label{Tab. collNotations}} 	
	\begin{tabular}{cl}
		\hline
		Notation and definition & Descriptions \\ 
		\hline
		$\vect{X}^k:=\{ \vect{p}^k_1,\cdots,\vect{p}^k_N \}$  & Set of all robot states at $k$-th time step\\
		$\vect{X}^{k_1:k_2}:=\{ \vect{X}^{k_1},\cdots,\vect{X}^{k_2} \}$  & Set of all robot states between time $k_1$ and $k_2$ \\
		$\vect{Z}^{k}_{i}:=\{ \vect{z}_{i,j}^k | \forall v_j \in \vect{N}_i^k \} = \{ \vect{\vec{Z}}^k_i, \vect{N}_i^k \}$  & Set of measurements for robot $i$ at time $k$ \\
		$\mathcal{E}^{k}:= \vect{N}_1^k \bigcup \vect{N}_2^k \bigcup \cdots \bigcup \vect{N}_N^k $ & The communication and observation network of system at time $k$ \\	
		$\vect{Z}^k,\vect{Z}^{k_1:k_2},\vect{U}^{k},\vect{U}^{k_1:k_2},\mathcal{E}^{k_1:k_2}$ & The same definitions as $\vect{X}^k$ and $\vect{X}^{k_1:k_2}$\\		
		$\vect{W}:=\{ \vect{W}_1,\cdots,\vect{W}_M \}$ & Set of states for all anchors in environment\\
		$[\vect{XW}]^k:=\{ \vect{X}^k, \vect{W} \}$ & Set of states for all nodes in environment at time $k$ \\ 
		$\vect{X}_j^k \in \vect{[XW]}^k$	&  Single node state that can indicate either robots or anchors \\
		$\mathcal{H}^{0:k}:=\{ \vect{Z}^{0:k}, \vect{U}^{0:k-1}, \vect{W} \}$ & History data that have already been collected by time $k$ \\
		$\mathcal{H}^{0:k+l|k}:=\{ \mathcal{H}^{0:k}, \vect{U}^{k:k+l-1} \}$ & History data with extra control candidates for $l$-th planning step. \\
		$\vect{E}:=\{ \mathcal{E}^{k+1:k+L}_{iter} | \forall iter \in \mathbb{N}^+_{N_{con}} \}$  & Set of all possible future connectivities during planning horizon $L$ \\
		\hline
	\end{tabular} 
	\vspace{-15pt}
\end{table*}

\section{Preliminaries and Problem Formulation }
\subsection{General Notations}

In this paper, the notation $\mathbb{N}^+$ is used for the set of all positive integers. Given an integer $Z>0$, we denote the set of all the positive integers no greater than $Z$ as  $\mathbb{N}^+_{Z}:= \{ 1,2,...,Z \}$. The notation $\mathbb{R}^n$ represents the vector space with dimensionality $n$ while $\mathbb{R}^{n \times m}$ is the matrix space whose element size is $n \times m$. The operator $|\cdot|$ returns the absolute value of a scalar.
The double-colon $\mathbf{:}$ used between two superscripts or subscripts represents a subset of consecutive elements, for example, $\vect{A} \in \mathbb{R}^n := \{a_1,a_2,...,a_n\}$, then we have $\vect{A}_{2:5} = \{a_2,a_3,a_4,a_5\}$.

\subsection{Configurations and Graph Theory} \label{Notations}

Consider a MRS composed of $N$ identical robots operating in a GNSS-limited environment. We further suppose that the movement and measurement processes all suffer from stochastic noise. Let $\vect{p}_i^{k}$ and $\vect{W}_j$ denote the $i$-th robot state at time step $k$ and the $j$-th assistant node state, respectively. All robots can only get their initial states and then need to cooperatively localize themselves using observations collected by the sensors and messages exchanged within the MRS. The assistant node state considered here may describe the positions of either landmarks (which remain static and unknown) or anchors (whose global positions are exactly known), which are observed by robots. While landmarks are generally used in the SLAM literature, anchors are often introduced in cases where specialized robots can occasionally get access to their exact position or when base stations are deployed in the environment broadcasting their global positions to nearby robots.  
Since landmarks can be categorized and formulated as special robots who don't suffer from motion noise and whose control inputs are always zeros, without loss of generality, we assume all assistant nodes are represented by anchors in this paper for simplicity, whose number is denoted by $M$. In the following, we intermittently use `nodes' to represent robots or anchors.

\textbf{\textit{Graph Theory}}: 
Assume all robots and anchors are associated with the nodes of a time-varying network topology graph $\mathcal{G}^k = \{ \mathcal{V}, \mathcal{E}^k \}$ where $\mathcal{V}$ and $\mathcal{E}^k$ are the set of graph vertices and edges at time step $k$, respectively. The $i$-th node in the graph is denoted by $ v_i, i \in \mathbb{N}^+_{M+N}$ as a result $ \mathcal{V} := \{ v_i|i\in \mathbb{N}^+_{M+N}\} $. In addition, the set of vertices $\mathcal{V}$ can be further divided into the subset of robots $\mathcal{V}_R$, and anchors $\mathcal{V}_A$, where $\mathcal{V}_R \cup \mathcal{V}_A = \mathcal{V}$. An edge between nodes $v_i$ and $v_j$ at time step $k$ is denoted by $ s_{i,j}^k=1$ if they can mutually observe and communicate with each other, and $s_{i,j}^k=0$ if not. Therefore, the sub-graph among robots is undirected, i.e. $s_{i,j}^k = s_{j,i}^k, \forall i,j \in \mathcal{V}_R$. However, the edge only exists from robots to anchor, and not the other way around. The edge set is then defined as $\mathcal{E}^k:=\{ v_i \times v_j \in \mathcal{V}_R \times \mathcal{V} | s_{i,j}^k = 1 \}$. The neighboring set $\vect{N}_i^k$ includes all the nodes $v_j$ such that $s_{i,j}^k = 1$ i.e. $\vect{N}_i^k := \{ v_j | s_{i,j}^k = 1 \}$ and its size is denoted by $n_i^k$. In the following, an edge $s_{ij}^k$ and the edge set $\mathcal{E}^k$ would be also refered to as the \textbf{connection} and the \textbf{network topology}, respectively. Furthermore, the node $v_i$ is simply denoted by its index $i$ and $i \in \mathcal{V}$ can implicitly denote all nodes in the environment, including both robots and anchors. Given the aforementioned scenario, if the planning horizon is $L$, then the number of future possible network topologies $N_{con}$ can be calculated by $N_{con} = 2^{N_s}, N_s = \frac{L(M+N)(M+N-1)}{2}$.

\subsection{Problem Formulations and Notations}
Given the notations in Table~\ref{Tab. collNotations}, we consider the conventional state transition model with additive Gaussian noise:
\begin{equation} \label{eq. motion model for robots}
	\vect{p}_i^{k+1} = f_i \left[ \vect{p}_i^k,\vect{u}_i^k,\vect{w}_i \right],
\end{equation}
where $\vect{w}_i \sim \mathcal{N}(\vect{0}, \vect{R})$ with known information matrices $\vect{R}$. 
We further denote the corresponding probabilistic term of \eqref{eq. motion model for robots} as $p(\vect{p}_{i}^{k+1}|\vect{p}_i^k,\vect{u}_i^k)$.

The observation model under investigation is slightly different from the prevailing research in the presence of limited sensing capacity. We instead explicitly denote the output of the observation model by a 2-tuple $\vect{z}_{ij}^k:=<\vect{\vec{z}}_{ij}^k,s_{ij}^k>$. As earlier defined in the \textbf{\textit{Graph Theory}} part, $s_{ij}^k$ is a binomial random variable representing the existence of a connection between node $i$ and $j$ at time step $k$. $\vect{\vec{z}}_{ij}^k$ is the raw measurement data from a sensing device and is only valid when $s_{ij}^k = 1$, i.e.  
\begin{equation} \label{eq. observation model for robots}
	\vect{\vec{z}}_{i,j}^k = h_{ij} \left[ \vect{p}^k_i,\vect{X}^k_j, \vect{v}_{i,j} \right], 
\end{equation}
where $\vect{v}_{i,j} \sim \mathcal{N}(\vect{0}, \vect{Q})$ with known information matrices $\vect{Q}$.
The dimensionality of measurement $\vect{\vec{z}}_{i,j}$ is determined by the type of sensor. For example, $\vec{z}_{i,j}$ is a scalar if only the relative distance is observed by, for example, the radio sensor through measuring the TOF or SOS. And $\vect{\vec{z}}_{i,j} \in \mathbb{R}^{2}$ if both distance and bearing are measured in radar. 

Taking as an example the measurement of relative distance, which can be realized by e.g. UWB, ultrasonic sensors, etc. If the real distance between any two nodes $i \in \mathcal{V}_R, j \in \mathcal{V}$ is:
	\begin{equation} \label{eq. regualtion function 2}
		d_{i,j}(k) = \left\lVert \vect{p}_{i}^k - \vect{X}_{j}^k \right\rVert_2,
	\end{equation}
	then given the maximum sensing radius $\rho$, the connection variable has:
	\begin{equation}	\label{eq. regualtion function 01}
		s_{i,j}^k = \left\lbrace
		\begin{array}{ll}
			1, & d_{ij}(k) \le \rho \\
			0, & d_{ij}(k) > \rho, 		
		\end{array}			
		\right.
	\end{equation}
	and the raw measurement data has:
	\begin{equation} \label{eq. observation model for robots_realtiveDistance}
		\vect{\vec{z}}_{i,j}^k = d_{ij}(k) + v_{ij}, \quad v_{ij} \sim \mathcal{N}(0,Q).
	\end{equation}
	We remark that the disk model is fairly standard in ad hoc networks where all sensors are identical and of the same transmission power. Therefore, this model is commonly applied to represent range-based measurement and communication devices.
	Thus, the probabilistic term for the observation model $p(\vect{z}_{i,j}^k|\vect{p}_{i}^k,\vect{X}_j^k)$ can be decomposed as:
	\begin{equation} \label{eq. decomMea}
		\begin{aligned}
			p(\vect{z}_{i,j}^k|\vect{p}_{i}^k,\vect{X}_j^k) &= p(\vect{\vec{z}}_{i,j}^k,s_{ij}^k|\vect{p}_{i}^k,\vect{X}_j^k) \\
			&= p(\vect{\vec{z}}_{i,j}^k|\vect{p}_{i}^k,\vect{X}_j^k,s_{ij}^k = 1)  p(s_{ij}^k = 1|\vect{p}_{i}^k,\vect{X}_j^k).
		\end{aligned}
\end{equation}

We can write the posterior probability distribution function (pdf) over the joint state as, 
\begin{equation}
	p(\vect{X}^{0:k}|\mathcal{H}^{0:k}).
\end{equation} 

Furthermore, given the prior distribution of robot states $p(\vect{p}_i^0),\forall i \in \mathcal{V}_R$, we can recursively expand the joint belief over system states by,
\begin{equation} \label{eq. belief expansion at current step}
	\begin{aligned}
		&b(\vect{X}^{0:k}) = p(\vect{X}^{0:k}|\mathcal{H}^{0:k}) \\
		&= \prod_{i \in \mathcal{V}_R} p(\vect{p}_i^0) \prod_{t=1}^{k} \left[ p(\vect{p}_{i}^{t}|\vect{p}_i^{t-1},\vect{u}_i^{t-1}) \prod_{j\in \vect{N}_i^t} p(z_{i,j}^t|\vect{p}_i^t,\vect{X}_j^t) \right].
	\end{aligned}
\end{equation}

If Gaussian noise is assumed, the belief can simply be denoted by a vector $\vect{\bar{X}}^{0:k}$ and a covariance (information) matrix $\vect{\Sigma}_{\bar{X}^{0:k}}^k$ representing the estimated means and uncertainties, respectively, i.e. $b(\vect{X}^{0:k})   =  \mathcal{N}(\vect{\bar{X}}^{0:k},\vect{\Sigma}_{\bar{X}^{0:k}}^k).$
In the estimation problem, the final purpose is to derive the optimal values of $\vect{\bar{X}}^{0:k}$ and $\vect{\Sigma}_{\bar{X}^{0:k}}^k$ using the available measurements i.e. $\vect{Z}^{0:k}$ that has been collected from all sensors up to time $k$. Therefore the connection variable in the estimation problem is deterministic and can be directly extracted from the collected data $\vect{Z}^{0:k}$ by:
\begin{equation}	\label{eq. connectionVarInEstimation}
	\left\lbrace
	\begin{array}{ll}
		s_{i,j}^t = 1, p(s_{i,j}^t = 1) = 1 & \text{if } \vec{z}_{ij}^t \in \vect{Z}^{0:k}\\
		s_{i,j}^t = 0, p(s_{i,j}^t = 1) = 0 & \text{otherwise}, 		
	\end{array}			
	\right. \forall t \le k,
\end{equation}
and the decomposition in \eqref{eq. decomMea} can be simplified as:
$$p(z_{i,j}^t|\vect{p}_i^t,\vect{X}_j^t) = p(\vec{z}_{i,j}^t|\vect{p}_i^t,\vect{X}_j^t), \forall \vec{z}_{i,j}^t \in \vect{Z}^{0:k}.$$
However, such simplification cannot be applied to the planning process since it lacks future measurement data. To highlight its impact on active planning, let's consider an expectation-based objective function $J_k$ as:
\begin{equation} \label{eq. objective function}
	\begin{aligned}
		J_{k}(\vect{U}^{k:k+L-1}) &= \mathop{\mathbb{E}}_{\vect{Z}^{k+1:k+L} }  \left[ \sum_{l=1}^{L} c^{l} \left( b(\vect{X}^{0:k+l}), \vect{U}^{k:k+l-1} \right)  \right],
	\end{aligned}	
\end{equation}
where the expectation operator $\mathbb{E}$ is taken with respect to the future observations of all robots i.e. $\vect{Z}^{k+1:k+L}$. The reason for using the expectation here is that the future measurements are stochastic in the presence of uncertainties from both motion and observation noise. $c^l(\cdot)$ denotes an immediate cost function at the $l$-th look-ahead step in terms of the joint belief $b(\vect{X}^{k+1:k+l})$ and of the given control candidates. Traditionally, a motion policy from time $k$ to $k+L$ is taken according to:
\begin{equation} \label{eq. original motion policy}
	\begin{aligned}
		^{*}\vect{U}^{k:k+L-1} &= \{ ^{*}\vect{U}^{k},^{*}\vect{U}^{k+1},...,^{*}\vect{U}^{k+L-1} \} \\
		&= \mathop{\arg\min}_{\vect{U}^{k:k+L-1} } \quad J_{k}(\vect{U}^{k:k+L-1}).
	\end{aligned}
\end{equation}

Now, let's rewrite the expectation in \eqref{eq. objective function} explicitly by recalling the composition of $\vect{Z}^{k+1:k+L} = \{ \vect{\vec{Z}}^{k+1:k+L}, \mathcal{E}^{k+1:k+L} \}$ and applying the total probability over the future network and raw measurement data: 
\begin{equation} \label{eq. objective function1}
	\begin{aligned}
		& J_{k}(\vect{U}^{k:k+L-1}) =  \\
		& \int_{\vect{\vec{Z}}^{k+1:k+L}} \int_{\mathcal{E}^{k+1:k+L}} p( \vect{\vec{Z}}^{k+1:k+L}, \mathcal{E}^{k+1:k+L} |\mathcal{H}^{0:k+L|k} ) \\
		& \left[ \sum_{l=1}^{L} c^{l} \left( \underbrace{p(\vect{X}^{0:k+l}|\mathcal{H}^{0:k+l|k}, \vect{\vec{Z}}^{k+1:k+l}, \mathcal{E}^{k+1:k+l}) }_{\text{$b(\vect{X}^{0:k+l})$}}, \vect{U}^{k:k+l-1} \right)  \right] \\
		& =  \sum_{\mathcal{E}^{k+1:k+L}_{iter} \in \vect{E} } \underbrace{ p( \mathcal{E}^{k+1:k+L}_{iter} |\mathcal{H}^{0:k+L|k} ) }_{\text{term A}}  \\
		&  \int_{\vect{\vec{Z}}^{k+1:k+L}}  \underbrace{ p( \vect{\vec{Z}}^{k+1:k+L} |\mathcal{H}^{0:k+L|k},\mathcal{E}^{k+1:k+L}_{iter} )}_{\text{term B}} \left[ \sum_{l=1}^{L} \underbrace{c^{l} \left( \cdot \right)}_\text{term C} \right]. 
	\end{aligned}	
\end{equation}

The objective is composed of three terms. Term A is defined as network probability, which indicates how likely it is to form such a network during the planning horizon given the history data and the control candidates. Term B represents the likelihood of collecting a possible measurement from sensors when the network is specified. \blue{Remark that term B can be computed by the expansion in the estimation problem in \eqref{eq. belief expansion at current step} given the existence of future network. Then the probability of raw measurement along each connection can be calculated by the measurment model as defined in \eqref{eq. observation model for robots_realtiveDistance}. } Term C is the immediate cost corresponding to the posterior probability of joint states from $0$ to $k+l$ given a set of raw measurements and the network topology. \blue{It should at least concerns with the predicted uncertainty of future robot states. }

It's clear that the network probability plays the role of weighting how much influence each future network branch can contribute to the objective function. Therefore, it is vital for the evaluation of the planning objective to accurately predict the network probability, instead of just using rough approximations as in existing works.  

\subsection{Spatial and Temporal Independence Assumption }

Regarding the network probability (Term A), we can proceed by marginalizing over all possible robot states and applying the chain rule, yielding,
	\begin{equation} 
		\begin{aligned}
			& p( \mathcal{E}^{k+1:k+L}_{iter} |\mathcal{H}^{0:k+L|k} ) = \int_{\vect{X}^{k+1:k+L}}  \\ & p(\mathcal{E}^{k+1:k+L}_{iter}|\vect{X}^{k+1:k+L},\mathcal{H}^{0:k+L|k} )  p(\vect{X}^{k+1:k+L}|\mathcal{H}^{0:k+L|k}).
		\end{aligned}	
\end{equation}
\blue{The following takes the assumption} that the measurement connections in future networks  $\mathcal{E}^{k+1:k+L}$ are both temporally and spatially independent. 
\begin{align}
		p(\mathcal{E}^{k}) = \prod_{i \in \mathcal{V}_R} \prod_{j \in \vect{N_i^k}} p(s_{i,j}^k), \label{eq. spatiallyIndependenceFutureC} \\
		p(\mathcal{E}^{k:k+L}) = \prod_{t=l}^{L} p(\mathcal{E}^{k+t}). \label{eq. temporallyIndependenceFutureC}
\end{align}

\blue{Note that this is a common assumption in the field of planning under uncertainty \cite{indelman2015planning,indelman2018cooperative}. Such assumption originates from estimation problems} where the Markov assumption is widely applied such that the future does not depend on the past given the present. It defines the merit of completeness to the states of robots, which entails that knowledge of past states, measurements, or controls carry no additional information that would help us to predict the future more accurately. As the states are complete under this assumption, then the measurement at the current time is conditionally independent of the past and future robot states, i.e. $[\vect{XW}]^k$ is sufficient to predict the potentially noisy measurement $\vect{Z}^{k}$.\textbf{ In other words, the Markov assumption guarantees that measurements are temporally independent \cite{thrun2002probabilistic}.} 
Furthermore, the measurements are also spatially independent \blue{if the basic functioning principle of the observation devices on each robot don't interfere with each other. The extension of such an assumption from estimation to planning hasn't been fully discussed in existing works. As this problem is beyond the scope of this paper, we just take this assumption and} leave the problem of investigating the exact relationship of the measurement connections during the planning session as an open problem for future research.

Based on the aforementioned assumption, the network probability is also independent from the history of data given the distribution of future states, which yields:
	\begin{equation} \label{eq. NeTopolyProba}
		\begin{aligned}
			& p( \mathcal{E}^{k+1:k+L}_{iter} |\mathcal{H}^{0:k+L|k} ) \\
			& = \int_{\vect{X}^{k+1:k+L}} p(\mathcal{E}^{k+1:k+L}_{iter}|\vect{X}^{k+1:k+L}) p(\vect{X}^{k+1:k+L}|\mathcal{H}^{0:k+L|k}) \\
			& = \prod_{\forall s_{ij}^{t} \in \mathcal{E}^{k+1:k+L}} \int_{\vect{p}_i^{t}, \vect{X}^{t}_j}   \left[  p( s_{ij}^{t} | \vect{p}_i^{t}, \vect{X}^{t}_j ) p(\vect{p}_i^{t}, \vect{X}^{t}_j | \mathcal{H}^{0:k+L|k} )  \right].
		\end{aligned}	
	\end{equation}
As a result, the network probability can be partitioned according to every possible individual connection. The integral over future states actually indicates that the probability computation of a connection is conditioned on the full distribution of its adjacent nodes. Since the integral can be over any distribution of the joint states $\vect{X}^{k+1:k+L}$, similarly to \cite{pathak2018unified}, we use the propagated belief $b(\vect{X}^{k+1:k+L|k}) $ in the following sections as:
\begin{align}
	b(\vect{X}^{k+1:k+L|k}) &= \int_{ ^{\neg} \vect{X}^{k+1:k+L}} b(\vect{X}^{0:k+L|k}) \\
	b(\vect{X}^{0:k+L|k}) &= b(\vect{X}^{0:k}) p(\vect{X}^{k+1:k+L}|\vect{X}^k,\vect{U}^{k+1:k+L}).	
\end{align}

Therefore, the remaining problem is: 

\noindent\textbf{Connection probability problem (CPP):} For a system equipped with range-based measurement devices and assuming Gaussian noise, accurately compute the probability of each future connection $s_{ij}^{k+t}, \forall i,j \in \mathcal{V}, \forall t \in \mathbb{N}^+_{L}$, i.e. $p(s_{i,j}^{k+t}=1)$, given the control candidate $\vect{U}^{k:k+L-1}$, current estimates $b(\vect{X}^{0:k})$ and prior knowledge about the sensing principles in \eqref{eq. regualtion function 2}-\eqref{eq. observation model for robots_realtiveDistance}. 

\section{APSE: Probability of a Connection}\label{Section 4: Connectivity prediction for two nodes}

In this section, an Adaptive Power Series Expansion (APSE) algorithm is developed for the CPP. First, a basic lemma about the factorial of a positive integer $n \in \mathbb{N}^+$ is presented.

\begin{lemma} \label{lm. lemma of the factorial of n}
	Given a positive integer $n$, its factorial has $ \left( \frac{n}{e} \right)^n < n! < e \left( \frac{n}{2} \right)^n$, where $e$ is the Natural Constant.
\end{lemma}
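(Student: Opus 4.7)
The plan is to handle the two inequalities separately, since they rely on different ideas. For the lower bound $\left(\frac{n}{e}\right)^n < n!$, I would invoke the Taylor expansion of the exponential: $e^n = \sum_{k=0}^{\infty} \frac{n^k}{k!}$. Since every term is nonnegative and the $k=n$ term alone equals $\frac{n^n}{n!}$, we get $e^n \ge \frac{n^n}{n!}$, and strictness comes from the presence of the other positive terms (e.g.\ the $k=0$ term contributes $1$). Rearranging yields $n! > \frac{n^n}{e^n} = \left(\frac{n}{e}\right)^n$. This step is essentially a one-liner.

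For the upper bound $n! < e\left(\frac{n}{2}\right)^n$, I would argue by induction on $n$. The base case $n=1$ is immediate because $1! = 1 < e/2$. For the inductive step, assume $n! < e(n/2)^n$ and write
\begin{equation*}
(n+1)! = (n+1)\cdot n! \;<\; (n+1)\, e\,(n/2)^n.
\end{equation*}
Dividing the desired inequality $(n+1)!< e\bigl((n+1)/2\bigr)^{n+1}$ through by $e\,(n/2)^n$ reduces the inductive step to verifying
\begin{equation*}
2 \;\le\; \Bigl(1+\tfrac{1}{n}\Bigr)^n.
\end{equation*}
This is classical: the sequence $\bigl(1+\tfrac{1}{n}\bigr)^n$ is monotonically increasing to $e$, and at $n=1$ it already equals $2$, so the bound holds for every $n\ge 1$. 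Strictness of the overall inequality carries through because the inductive hypothesis is strict, even in the edge case $n=1$ where the auxiliary estimate $2 \le (1+1/n)^n$ is an equality.

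The main (very mild) obstacle is really just spotting the right algebraic reduction in the inductive step; once the problem is reduced to $(1+1/n)^n \ge 2$, the classical monotonicity of that sequence closes the argument. No analytic tool beyond elementary calculus or Bernoulli-type reasoning is required, and Stirling's formula is not needed (though it would of course imply both bounds asymptotically and serves as a sanity check that the constants $e$ and $e/2$ are of the right order).
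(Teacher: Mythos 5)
Your proof is correct. The paper itself states this lemma without proof (it is presented as a ``basic lemma'' and no argument appears in the appendices), so there is no in-paper derivation to compare against. Both halves of your argument are sound: the lower bound via the single term $\frac{n^n}{n!}$ in the series $e^n=\sum_{k\ge 0}\frac{n^k}{k!}$ is the standard one-line derivation, and the induction for the upper bound reduces cleanly to $2\le\left(1+\frac{1}{n}\right)^n$, with strictness correctly preserved through the strict inductive hypothesis even at $n=1$ where that auxiliary bound is an equality.
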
 

Further, Provost and Mathai's \blue{lemma}  forms the basis of our main algorithm. \blue{We state the theorem below and refer interested readers to Section 4.2 pp. 91-99 in \cite{provost1992quadratic} for the complete proof.}

\begin{lemma} \label{Theorem: Power series expansion of quadratic form of normal distribution}
	Given the p-dimensional multivariate normal distribution $\vect{X} \sim \mathcal{N}_p(\vect{\mu}, \vect{\Sigma})$, $\vect{\Sigma} > 0$ and its quadratic form $Y = Q(\vect{X}) = \vect{X}^T\vect{A}\vect{X}$,$\vect{A} = \vect{A}^T > 0$, let $\vect{b} = \vect{P}^T\vect{\Sigma}^{-\frac{1}{2}}\vect{\mu}$ and $\vect{\lambda} = [\lambda_1,...,\lambda_p]^T$ be the eigenvalues of $ \vect{\Sigma}^{\frac{1}{2}}A\vect{\Sigma}^{\frac{1}{2}}$, i.e. $\vect{P}^T\vect{\Sigma}^{\frac{1}{2}}A\vect{\Sigma}^{\frac{1}{2}}\vect{P} = diag(\vect{\lambda})$, $\vect{P}^T\vect{P} = I$. Then the corresponding cumulative distribution function (CDF) of $Y$, that is $p\{ Y \le y \}$, will be denoted by $F_p(\vect{\lambda};\vect{b};y)$ \blue{and it} can be expanded as follows:
	\begin{equation} \label{eq. PSE cumulative probability function, origin }
		F_p(\vect{\lambda};\vect{b};y) = \sum_{\blue{w=0}}^{\infty}(-1)^w c_w \frac{y^{\frac{p}{2} + \blue{w}}}{\Gamma(\frac{p}{2}+w+1)}, \quad 0<y<\infty,
	\end{equation}
	where the coefficients $c_w$ is defined by,
	\begin{equation} \label{eq. computation of the coefficient c_k}
		\begin{aligned}
			c_0 &= exp(-\frac{1}{2}\sum_{j=1}^{p}b_j^2)\prod_{j=1}^p(2\lambda_j)^{-\frac{1}{2}}, \\
			c_w &= \frac{1}{w}\sum_{r=0}^{\blue{w}-1}d_{w-r}c_r, \quad w \ge 1,
		\end{aligned}
	\end{equation}
	and $d_w$ is given by,
	\begin{equation}\label{eq. computation of the coefficient d_k}
		d_w = \frac{1}{2} \sum_{j=1}^p (1-wb_j^2)(2\lambda_j)^{-w},\quad w \ge 1.
	\end{equation}
\end{lemma}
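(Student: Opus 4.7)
My plan is to diagonalize the quadratic form into a weighted sum of independent noncentral $\chi^2_1$ variables, write its Laplace transform in closed form, and then read off both the CDF series (by term-by-term Laplace inversion) and the recursion for $c_w$ (by logarithmic differentiation). Concretely, using the transformation already in the statement — writing $\vect{X}=\vect{\mu}+\vect{\Sigma}^{1/2}\vect{Z}$ with $\vect{Z}\sim\mathcal{N}_p(\vect{0},I)$ and setting $\vect{W}=\vect{P}^T(\vect{Z}+\vect{\Sigma}^{-1/2}\vect{\mu})$ — one obtains $\vect{W}\sim\mathcal{N}_p(\vect{b},I)$ and $Y=\vect{W}^T\mathrm{diag}(\vect{\lambda})\vect{W}=\sum_{j=1}^{p}\lambda_j W_j^2$. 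Multiplying the univariate noncentral-$\chi^2_1$ Laplace transforms and applying the identity $sb_j^2\lambda_j/(1+2s\lambda_j)=\tfrac{b_j^2}{2}(1-(1+2s\lambda_j)^{-1})$ yields
\begin{equation*}
L(s) := \mathbb{E}[e^{-sY}] = e^{-\frac{1}{2}\sum_j b_j^2}\prod_{j=1}^{p}(1+2s\lambda_j)^{-1/2}\exp\!\left(\sum_{j}\tfrac{b_j^{\,2}/2}{1+2s\lambda_j}\right).
\end{equation*}

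Next I would expand $L(s)$ as a series in $1/s$ about $s=\infty$. Factoring $(2s\lambda_j)^{-1/2}$ out of each $(1+2s\lambda_j)^{-1/2}$ isolates a prefactor $s^{-p/2}\prod_j(2\lambda_j)^{-1/2}$ and leaves a remainder which, together with the exponential factor, is analytic in $1/s$ at infinity and equals $1$ there; this forces an expansion $L(s)=\sum_{w\ge 0}(-1)^w c_w s^{-(p/2+w)}$ with leading coefficient $c_0=e^{-\frac{1}{2}\sum b_j^2}\prod_j(2\lambda_j)^{-1/2}$, exactly as in \eqref{eq. computation of the coefficient c_k}. Termwise inversion via $\mathcal{L}^{-1}\{s^{-(p/2+w)}\}(y)=y^{p/2+w-1}/\Gamma(p/2+w)$ then yields
\begin{equation*}
f_Y(y) = \sum_{w\ge 0}(-1)^w c_w\frac{y^{p/2+w-1}}{\Gamma(p/2+w)},
\end{equation*}
and integrating on $[0,y]$ with $\Gamma(p/2+w+1)=(p/2+w)\Gamma(p/2+w)$ produces the CDF expansion \eqref{eq. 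PSE cumulative probability function, origin }.

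To pin down the recursion I would set $\tilde L(s):=s^{p/2}L(s)=\sum_{w\ge 0}(-1)^w c_w s^{-w}$ and compute $s\tilde L'(s)/\tilde L(s)$ directly from the closed form. Taking the logarithmic derivative, simplifying $s\lambda_j/(1+2s\lambda_j)=\tfrac12-(1/2)/(1+2s\lambda_j)$, and expanding both $(1+2s\lambda_j)^{-1}$ and its $s$-derivative as geometric series in $1/s$ produces
\begin{equation*}
\frac{s\tilde L'(s)}{\tilde L(s)} = \sum_{m\ge 1}(-1)^{m-1}d_m s^{-m},\quad d_m=\tfrac12\sum_j(1-mb_j^2)(2\lambda_j)^{-m},
\end{equation*}
with $d_m$ exactly the quantity in \eqref{eq. computation of the coefficient d_k}. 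Substituting the series for $\tilde L$ and matching the $s^{-w}$ coefficient in $s\tilde L'(s)=\tilde L(s)\cdot\sum_{m\ge 1}(-1)^{m-1}d_m s^{-m}$ then produces, once the sign factors cancel in pairs, the convolution recursion $wc_w=\sum_{r=0}^{w-1}d_{w-r}c_r$ of \eqref{eq. computation of the coefficient c_k}.

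The main obstacles I anticipate are analytic rather than algebraic: (i) justifying termwise inverse Laplace transformation, which requires showing the $1/s$-expansion of $L(s)$ converges on some right half-plane and then appealing to uniqueness of Laplace transforms together with a dominated-convergence estimate on the Bromwich contour; (ii) controlling the radius of convergence of that expansion, which is governed by $\min_j\lambda_j$; and (iii) careful bookkeeping of signs and factors of $2\lambda_j$ in the logarithmic differentiation so that $d_m$ emerges in the prescribed form. Once these analytic steps are in place, the algebraic skeleton above yields the claimed series representation and recursion.
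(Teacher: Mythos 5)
The paper itself does not prove this lemma: it is quoted as Provost and Mathai's result with a pointer to Section~4.2, pp.~91--99 of \cite{provost1992quadratic}, so there is no in-paper proof to compare against. Your derivation is correct and is essentially the argument of that reference: reducing $Y$ to $\sum_j \lambda_j W_j^2$ with $\vect{W}\sim\mathcal{N}_p(\vect{b},I)$, expanding the Laplace transform in powers of $1/s$ about infinity to read off $c_0$ and the form of the series, inverting termwise, and extracting the recursion from the logarithmic derivative. I verified the two delicate algebraic steps --- the coefficient of $(2s\lambda_j)^{-m}$ in $s\,\frac{d}{ds}\ln\tilde L(s)$ does collapse to $\tfrac12(-1)^{m-1}(1-mb_j^2)$, giving exactly the stated $d_m$, and the sign factors in the Cauchy product do cancel to give $wc_w=\sum_{r=0}^{w-1}d_{w-r}c_r$ --- and the analytic caveats you flag (justifying termwise inversion and the convergence of the $1/s$-expansion for $|s|$ sufficiently large) are precisely the points the cited reference attends to.
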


\blue{More in-depth theoretical analysis of computing the CDF of quadratic form in normal variables by series representations can be found in \cite{kotz1967series1,kotz1967series2}}

\subsection{Modeling and Finite-term Approximation} \label{ModelFiniteApproximation}

At planning time step $k$, for two arbitrary nodes in the scenario, we can extract their state distribution from the propagated belief $b(\vect{X}^{k+1:k+L|k})$. Let's denote their true positions at time step $k+l$ under the control candidate as $\vect{p}_1,\vect{p}_2$. Then if the estimator used for propagated belief $b(\vect{X}^{k+1:k+L|k})$ is complete, the distributions of $\vect{p}_1,\vect{p}_2$ can be reasonably treated by two normal distributions $\bar{\vect{P}}_1 \sim \mathcal{N}_1(\vect{\mu}_1 , \vect{\Sigma}_1)$ and  $\bar{\vect{P}}_2 \sim \mathcal{N}_2(\vect{\mu}_2 , \vect{\Sigma}_2)$, which can be extracted from the propagated belief $b(\vect{X}^{k+l})$. 

As a result, we know that the subtraction of $\bar{\vect{P}}_1$ and $\bar{\vect{P}}_2$ is also a normal distribution. If the two distributions of future robot positions are independent of each other, then subtraction has $\Delta \bar{\vect{P}} = \bar{\vect{P}}_1 - \bar{\vect{P}}_2 \sim \mathcal{N}_{\Delta}(\vect{\mu}_1 - \vect{\mu}_2 , \vect{\Sigma}_1 + \vect{\Sigma}_2)$. Otherwise, let us denote their covariance $\vect{\Sigma}_{1,2}$, then the variance of subtraction is slightly different:  $\Delta \bar{\vect{P}} = \bar{\vect{P}}_1 - \bar{\vect{P}}_2 \sim \mathcal{N}_{\Delta}(\vect{\mu}_1 - \vect{\mu}_2 , \vect{\Sigma}_1 + \vect{\Sigma}_2 - 2\vect{\Sigma}_{1,2})$.

By denoting $\Delta \bar{\vect{P}} = [\Delta \bar{P}_x, \Delta \bar{P}_y]^T$, the square of the true distance $d_{12} = \left\lVert\vect{p}_1 - \vect{p}_2\right\rVert_2$ can be equally represented by a quadratic random variable $Y = Q(\Delta \bar{\vect{P}}) = \Delta \bar{P}_x ^2 + \Delta \bar{P}_y ^2 = \Delta \bar{\vect{P}}^T \vect{A} \Delta \bar{\vect{P}}$, where $\vect{A} = \vect{I}$.

Considering the future connection variable $s_{1,2}$ under a range-based communication and observation device, whose maximum range is $\rho$, the relationship between random variables $Y$ and $s_{1,2}$ is,
\begin{equation}
	p\left( s_{1,2}=1 \right) = p\left( Y \le \rho ^2\right).
\end{equation}

As a result, it is straightforward to use Theorem~\ref{Theorem: Power series expansion of quadratic form of normal distribution} with dimensionality $p=2$ and input $y=\rho^2$ to predict the probability of connection between two nodes, given the estimates of the means and covariance matrices of both nodes. However, Theorem~\ref{Theorem: Power series expansion of quadratic form of normal distribution} needs to compute an infinite summation of power series terms, which is intractable for practical applications. One instinctive approach is to sum only a finite but sufficient number of terms. This means replacing \eqref{eq. PSE cumulative probability function, origin } with,
\begin{equation} \label{eq. PSE cumulative probability function, finite terms }
	F_2(\vect{\lambda};\vect{b};y) = \sum_{w=0}^{w_m}(-1)^w c_w \frac{y^{w+1}}{\Gamma(w+2)}, \quad 0<y<\infty,
\end{equation}
where $w_m$ is a proper maximum degree that our algorithm must determine. However, this method will introduce errors to the calculation of $F_p$ compared with the original infinite sum version in Theorem~\ref{Theorem: Power series expansion of quadratic form of normal distribution}. Thus, the main difficulty that remains is how to maintain the balance between the tractability and probability accuracy of our algorithm. \blue{This trade-off is determined by the choice of degree $w_m$, which will be the main focus of the following subsections.}

\subsection{Stability Analysis} \label{subsec. Stability Analysis}
In this subsection, some theoretical analyses are discussed in terms of the truncation error to provide some insights on how to determine this key parameter $w_m$. After selecting an appropriate $w_m$, the error of the CDF introduced by the omission of higher order terms is,  
\begin{equation} \label{eq. PSE erro, cumulative probability function}
	\Delta F_2(\vect{\lambda};\vect{b};y) = \sum_{w=w_m}^{\infty}(-1)^w c_w \frac{y^{w+1}}{\Gamma(w+2)}, \quad 0<y<\infty.
\end{equation}
Since $w+2$ is always a positive integer, the Gamma function used here is the factorial of $w+1$, i.e.
\begin{equation} \label{eq. truncation error 1}
	\begin{aligned}
		\Delta F_2(\vect{\lambda};\vect{b};y) = \sum_{w=w_m}^{\infty}(-1)^w c_w \frac{y^{w+1}}{(w+1)!} .
	\end{aligned}
\end{equation}

As we can see from the right-hand side of \eqref{eq. truncation error 1}, this error term is also an infinite summation of series. Nevertheless, we are going to show that the error can be arbitrarily reduced so long as the degree $w_m$ is appropriately selected. For the simplicity of derivation, firstly, the coefficient $d_w$ from Theorem~
\ref{Theorem: Power series expansion of quadratic form of normal distribution} is shown to be bounded in the following corollary. 

\begin{cor} \label{Corollary of d_k}
	Given the 2-dimensional multivariate normal distribution $\vect{X} \sim N_2(\vect{\mu}, \vect{\Sigma})$, let $\vect{\lambda} = [\lambda_1, \lambda_2]^T$ be the eigenvalues of covariance $\vect{\Sigma}$ and $\vect{b} = \vect{P}^T\Sigma^{\frac{1}{2}}\vect{\mu} = [b_1,b_2]^T$ where $\vect{P}$ is the eigenvector matrix of $\vect{\Sigma}$. \blue{Then} if we have,
	\begin{equation}
		\left \lbrace
		\begin{aligned}
			&2\lambda_j > 1  \\ 
			&w \ge \frac{1}{b_j^2} + \frac{1}{2\lambda_j - 1} \\
		\end{aligned}	
		\right.
		,\forall \blue{j \in \{1,2\}},	
	\end{equation}
	the limit of series $d_w$ computed according to \eqref{eq. computation of the coefficient d_k} is finite, i.e.
	\begin{equation}
		\mathop{\lim}_{\blue{w\rightarrow \infty}} d_w = 0,
	\end{equation}
	and we have a positive number $d^u > 0$ such that, 
	$$|d_w|<d^u,\forall w \in \mathbb{N}^+.$$
	
\end{cor}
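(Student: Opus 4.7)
The plan is to analyze the two summands of $d_w$ individually, extract the precise threshold from a discrete ratio computation, and then assemble the two claims. Setting
\[
d_w \;=\; \tfrac{1}{2}\bigl(f_1(w)+f_2(w)\bigr),\qquad f_j(w) := (1-wb_j^2)(2\lambda_j)^{-w},
\]
it suffices to show that each $|f_j(w)|$ tends to $0$ and is uniformly bounded on $\mathbb{N}^+$.

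First I would separate the sign: once $w \ge 1/b_j^2$, the factor $1-wb_j^2$ is nonpositive, so
\[
|f_j(w)| \;=\; g_j(w) \;:=\; (wb_j^2-1)(2\lambda_j)^{-w},
\]
which is a nonnegative sequence for all $w$ in the regime of interest. The intuition ``linear growth $\times$ geometric decay $\to 0$'' is clear as soon as $2\lambda_j>1$; the real work is to turn this into the precise algebraic threshold the corollary asserts.

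To do that, I would form the ratio of consecutive terms,
\[
\frac{g_j(w+1)}{g_j(w)} \;=\; \frac{(w+1)b_j^2-1}{(wb_j^2-1)\,(2\lambda_j)},
\]
and solve $g_j(w+1) < g_j(w)$ for $w$. Cross-multiplying (legitimate because $wb_j^2>1$ and $2\lambda_j>1$) and collecting the $w$-terms yields exactly $w > 1/b_j^2 + 1/(2\lambda_j-1)$, recovering the hypothesis. Thus under the stated conditions $g_j$ is strictly monotonically decreasing in $w$. Since its limiting ratio is $1/(2\lambda_j)<1$, picking any $\alpha\in(1/(2\lambda_j),1)$ gives $g_j(w+1)\le \alpha\, g_j(w)$ eventually, so $g_j(w)\to 0$ geometrically; summing the two indices gives $d_w\to 0$.

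For the uniform bound, $|d_w|\le \tfrac{1}{2}\bigl(|f_1(w)|+|f_2(w)|\bigr)$ is a sequence of nonnegative reals that converges to $0$; hence it attains its maximum at some finite $w$, and any number at least that large serves as $d^u$. The main obstacle is really the discrete monotonicity step: translating the continuous ``exponential beats linear'' intuition into the exact threshold $1/b_j^2 + 1/(2\lambda_j-1)$ requires careful cross-multiplication while tracking the sign of $wb_j^2-1$. Once that identity is in hand, the convergence and the existence of $d^u$ follow from standard facts about monotone and convergent sequences.
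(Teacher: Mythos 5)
Your proposal is correct and follows essentially the same route as the paper's Appendix A: both extract the threshold $w \ge 1/b_j^2 + 1/(2\lambda_j-1)$ from a consecutive-term comparison showing $|d_w|$ is eventually decreasing, then combine this with the limit $d_w \to 0$ (you via the geometric ratio $1/(2\lambda_j)<1$, the paper via L'H\^opital) to conclude boundedness. Your per-summand ratio test versus the paper's difference $d_{w+1}-d_w$ on the full sum is only a cosmetic variation, and your cross-multiplication recovers the identical algebraic threshold.
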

\begin{proof}
	The proof is given in Appendix \ref{subsec proof d_k}.
\end{proof}

Hereafter, we create two new series $\tilde{c}_w$ and $\tilde{d}_w$, respectively, where $\tilde{d}_w$ is defined as, 
\begin{equation} \label{eq. new tilde_d_k}
	\tilde{d}_w := [d^u, d^u, ... ,d^u].
\end{equation}
Let $\tilde{c}_0 = c_0 $, the remaining terms of $\tilde{c}_w$ are again recursively computed by \eqref{eq. computation of the coefficient c_k}, i.e.
\begin{equation} \label{eq. new tilde_c_k}
	\begin{aligned}
		\tilde{c}_0 &= c_0 = exp(-\frac{1}{2}\sum_{j=1}^{p}b_j^2)\prod_{j=1}^p(2\lambda_j)^{-\frac{1}{2}}, \\
		\tilde{c}_w &= \frac{1}{w}\sum_{r=0}^{w-1} \tilde{d}_{w-r}\tilde{c}_r, \quad w \ge 1.
	\end{aligned}
\end{equation}

The idea behind this new series $\tilde{c}_w$ is to serve as an envelope of $c_w$. We note that the definition equation in \eqref{eq. new tilde_c_k} is implicit and vague to understand. Hence, an exact formula is developed in the following corollary to provide a recursive version of its computation and the relationship between $c_w$ and $\tilde{c}_w$.

\begin{cor} \label{corollary of tilde_c_k}
    The new series $\tilde{c}_w$ defined in \eqref{eq. new tilde_c_k} and \eqref{eq. new tilde_d_k}, can be recursively computed by,
	\begin{equation} \label{eq. recurrsive equation of c_k+1 and c_k given a equal d_k}
		\tilde{c}_{w+1} = \frac{d^u+w}{w+1} \tilde{c}_{w},
	\end{equation}
	and it is an envelope of $c_w$ in \eqref{eq. computation of the coefficient c_k}, i.e. 
	\begin{equation} \label{eq. abstract relationship of c_k and tilde_c_k}
		|c_w| \le |\tilde{c}_w|, \forall w \in \mathbb{N}^+.
	\end{equation}
\end{cor}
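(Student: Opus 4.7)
My plan has two parts that essentially correspond to the two claims of the corollary, and I would prove them in the order stated since the second uses the structural identity derived en route to the first.

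For the recursive formula, I would simply unpack the definition. Starting from \eqref{eq. new tilde_c_k} with $\tilde{d}_{w-r} \equiv d^u$, I write
\begin{equation*}
w\,\tilde{c}_w \;=\; d^u\sum_{r=0}^{w-1}\tilde{c}_r,\qquad (w+1)\,\tilde{c}_{w+1} \;=\; d^u\sum_{r=0}^{w}\tilde{c}_r.
\end{equation*}
Subtracting the first identity from the second telescopes the partial sum, giving $(w+1)\tilde{c}_{w+1}-w\,\tilde{c}_w = d^u\,\tilde{c}_w$, which rearranges to \eqref{eq. recurrsive equation of c_k+1 and c_k given a equal d_k}. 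As a by-product, I would also note that since $\tilde{c}_0 = c_0 = \exp(-\tfrac12\sum b_j^2)\prod(2\lambda_j)^{-1/2} > 0$ and $d^u>0$, the recursion forces $\tilde{c}_w > 0$ for every $w\in\mathbb{N}^+$, so $|\tilde{c}_w|=\tilde{c}_w$.

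For the envelope inequality I would use induction on $w$. The base case $w=0$ is immediate from $\tilde{c}_0 = c_0$. For the inductive step, assume $|c_r|\le \tilde{c}_r$ for all $r\le w-1$. Applying the triangle inequality to \eqref{eq. computation of the coefficient c_k} and then invoking Corollary~\ref{Corollary of d_k} (which supplies $|d_{w-r}|\le d^u$),
\begin{equation*}
|c_w| \;\le\; \frac{1}{w}\sum_{r=0}^{w-1}|d_{w-r}|\,|c_r| \;\le\; \frac{d^u}{w}\sum_{r=0}^{w-1}\tilde{c}_r \;=\; \frac{d^u}{w}\cdot\frac{w\,\tilde{c}_w}{d^u} \;=\; \tilde{c}_w,
\end{equation*}
where the penultimate equality is exactly the partial-sum identity derived in the first part of the proof. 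This closes the induction and yields \eqref{eq. abstract relationship of c_k and tilde_c_k}.

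I do not expect any real obstacle here: the argument is essentially bookkeeping once one observes that replacing the oscillating, decaying $d_w$ by the constant majorant $d^u$ decouples the recursion enough to admit a closed-form ratio. The only subtlety worth flagging in writing is ensuring that the hypotheses of Corollary~\ref{Corollary of d_k} are in force so that $d^u$ genuinely bounds $|d_w|$ uniformly; otherwise the triangle-inequality step above is not justified. Everything else is immediate algebraic manipulation, and the positivity of $\tilde{c}_w$ (which lets me drop the absolute value on the envelope side) falls out of the same recursion.
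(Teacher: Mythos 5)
Your proposal is correct and follows essentially the same route as the paper's proof: both derive the recursion \eqref{eq. recurrsive equation of c_k+1 and c_k given a equal d_k} by subtracting the partial-sum expansions of $(w+1)\tilde{c}_{w+1}$ and $w\tilde{c}_w$ (exploiting that $\tilde{d}_w\equiv d^u$ is constant), and both establish \eqref{eq. abstract relationship of c_k and tilde_c_k} by strong induction using the triangle inequality together with the bound $|d_w|\le d^u$ from Corollary~\ref{Corollary of d_k} and the positivity of $\tilde{c}_w$. Your explicit flagging of the hypotheses of Corollary~\ref{Corollary of d_k} is a reasonable tidying of a point the paper leaves implicit, but it does not change the argument.
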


\begin{proof}
	The proof is given in Appendix \ref{subsec proof c_k}.
\end{proof}

\blue{Moving on, let's denote a positive integer $D \in \mathbb{N}^+$ such that $d^u \le D \le d^u+1$, } then we further introduce a new series defined as,

1) if $D = 1$
\begin{equation} \label{eq. new series e_k^D, D = 1}
	e^1_w = 	\frac{c_w}{(w+1) }, \forall w \in \mathbb{N}^+,
\end{equation}

2) if $D \ge 2$
\begin{equation} \label{eq. new series e_k^D, D>2}
	e^D_w = \left\lbrace 
	\begin{array}{ll}
		c_w, & w \le (D - 2), \\
		\frac{c_w}{\prod_{j=1}^{D} (w+2-j) }, & w \ge (D - 1).
	\end{array}	\right.
\end{equation}
A fact about $e^D_w$ is summarized in the following Corollary.

\begin{cor}\label{corollary of e_k^D}
	Given a finite and positive integer $D$, the limit of $|e_w^D|$ is also finite i.e. 
	\begin{equation} \label{eq. limitation of e_k^D}
		\mathop{\lim}_{w \rightarrow \infty} |e_w^D| = \mathit{Const.} ,
	\end{equation}
	and its maximum is constrained by $|e_w^D| \le \tilde{c}_{D-1}$.
\end{cor}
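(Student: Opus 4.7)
The plan is to bound $|e_w^D|$ by an envelope sequence built from the dominating coefficients $\tilde{c}_w$ of Corollary~\ref{corollary of tilde_c_k}, and then exploit the simple two-term recursion \eqref{eq. recurrsive equation of c_k+1 and c_k given a equal d_k} to control its growth.

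First I would define a companion sequence $\tilde{e}_w^D$ by mirroring the case split in \eqref{eq. new series e_k^D, D = 1} and \eqref{eq. new series e_k^D, D>2} with $\tilde{c}_w$ in place of $c_w$. Since Corollary~\ref{corollary of tilde_c_k} gives $|c_w|\le\tilde{c}_w$ and the denominators $\prod_{j=1}^D(w+2-j)$ are positive on the relevant range, this yields $|e_w^D|\le\tilde{e}_w^D$ for every $w$. The problem then reduces to controlling a positive sequence defined by a clean multiplicative recursion.

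Next, for $w\ge D-1$ I would compute the ratio
\[
\frac{\tilde{e}_{w+1}^D}{\tilde{e}_w^D}=\frac{\tilde{c}_{w+1}}{\tilde{c}_w}\cdot\frac{\prod_{j=1}^D(w+2-j)}{\prod_{j=1}^D(w+3-j)}=\frac{d^u+w}{w+1}\cdot\frac{w+2-D}{w+2},
\]
using \eqref{eq. recurrsive equation of c_k+1 and c_k given a equal d_k} and telescoping of the two products. Expanding $(d^u+w)(w+2-D)-(w+1)(w+2)=(d^u-D-1)w+d^u(2-D)-2$ and invoking $D\ge d^u$, which forces $d^u-D-1\le-1$, shows this ratio is strictly below $1$ for all $w\ge D-1$. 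Hence $\tilde{e}_w^D$ is non-increasing on $\{w\ge D-1\}$.

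From here the two assertions follow with little extra work. At the boundary, $\tilde{e}_{D-1}^D=\tilde{c}_{D-1}/D!\le\tilde{c}_{D-1}$, and monotonicity propagates the bound to every $w\ge D-1$. For $w\le D-2$, which is relevant only when $D\ge 2$ (in which case $d^u\ge D-1\ge 1$), the factor $(d^u+w)/(w+1)\ge 1$ in the recursion makes $\tilde{c}_w$ non-decreasing, so $|e_w^D|=|c_w|\le\tilde{c}_w\le\tilde{c}_{D-1}$. The case $D=1$ is handled separately: then $d^u\le 1$ makes $\tilde{c}_w$ non-increasing, and a direct evaluation $\tilde{e}_w^1=\tilde{c}_w/(w+1)\le\tilde{c}_0=\tilde{c}_{D-1}$ closes it. For the convergence claim, the refined expansion of the ratio gives $\tilde{e}_{w+1}^D/\tilde{e}_w^D-1\sim(d^u-D-1)/w$, so $\sum_w\ln(\tilde{e}_{w+1}^D/\tilde{e}_w^D)$ behaves like a negative constant multiple of the harmonic series and diverges to $-\infty$; therefore $\tilde{e}_w^D\to 0$, and squeezing yields $|e_w^D|\to 0$, a finite constant. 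The main obstacle I anticipate is the uniform sign and magnitude bookkeeping in the ratio inequality across the boundary $w=D-1$ together with the separate $D=1$ regime, since the algebra must hold for every admissible $d^u$ in the narrow window $D-1\le d^u\le D$.
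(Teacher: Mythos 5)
Your proposal is correct and follows essentially the same route as the paper's proof: build the companion envelope $\tilde{e}_w^D$ from $\tilde{c}_w$, use the recursion \eqref{eq. recurrsive equation of c_k+1 and c_k given a equal d_k} to show the ratio $\frac{(d^u+w)(w+2-D)}{(w+1)(w+2)}$ is below one for $w\ge D-1$, and track the maximum across the boundary $w=D-2$ to get $\tilde{c}_{D-1}$. Your harmonic-series argument at the end is in fact slightly sharper than the paper's (which only invokes monotone boundedness to get a finite limit), since it identifies the limit as zero.
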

\begin{proof}
	The proof is given in Appendix \ref{subsec proof e_k^D}.
\end{proof}

Finally, we can conclude our main results of the truncation error in  \eqref{eq. truncation error 1} in the following Theorem.

\begin{theorem} \label{Th. boundedness of truncation error}
	Given the 2-dimensional multivariate normal distribution $\vect{X} \sim N_2(\vect{\mu}, \vect{\Sigma})$, let $\vect{\lambda} = [\lambda_1, \lambda_2]^T$ be the eigenvalues of covariance $\vect{\Sigma}$ and $\vect{b} = \vect{P}^T\Sigma^{\frac{1}{2}}\vect{\mu} = [b_1,b_2]^T$ where $\vect{P}$ is the orthogonal matrix composed of the eigenvectors of $\vect{\Sigma}$. Let us compute the series $d_w$ according to \eqref{eq. computation of the coefficient d_k} and denote a positive integer $D$ \blue{satisfying $d_w \le D \le d_w + 1$}. Suppose, 
	\begin{equation}  \label{eq. sufficient and necessary conditions of in bounded theorem of lamda}
		2\lambda_j >1,\forall j \in \{1,2\}, 
	\end{equation}
	then the truncation error $\Delta F_2(\vect{\lambda};\vect{b},y)$ in \eqref{eq. truncation error 1} can be reduced to a given threshold $\delta f > 0$, i.e. 
	\begin{equation}
		\Delta F_2(\vect{\lambda};\vect{b},y) < \delta f,
	\end{equation}
	if the degree $w_m \in \mathbb{N}^+$ in \eqref{eq. PSE cumulative probability function, finite terms } is selected as,	
	\begin{equation}  \label{eq. sufficient and necessary conditions of in bounded theorem of k_m}
		\left\lbrace
		\begin{aligned}
			&w_m \ge \frac{1}{b_j^2} + \frac{1}{2\lambda_j - 1}, \forall j \in \{ 1,2 \}, \\
			&w_m \ge e^2y + D, \\
			&w_m \ge 3D - ln \left( \frac{\delta f}{c_0y^D} \right) - 1.	 	
		\end{aligned}
		\right.		
	\end{equation}
\end{theorem}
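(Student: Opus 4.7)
The plan is to majorize the alternating series $\Delta F_2$ by a positive series, then successively pass through the auxiliary sequences $\tilde c_w$ and $e^D_w$ of Corollaries~\ref{corollary of tilde_c_k} and~\ref{corollary of e_k^D} until what remains is a tail of the exponential series of $y$, which Lemma~\ref{lm. lemma of the factorial of n} controls exponentially. The three selection conditions on $w_m$ will enter at three different places: the first ensures via Corollary~\ref{Corollary of d_k} (together with the hypothesis $2\lambda_j>1$) that a finite $d^u$, and hence the envelopes $\tilde c_w$ and $e^D_w$, actually exist; the second forces the tail of $\sum y^m/m!$ to behave geometrically; and the third, which will be the delicate one, absorbs $\tilde c_{D-1}$ together with the geometric factor into $\delta f$.

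First, I would take absolute values termwise to get
\[
|\Delta F_2(\vect{\lambda};\vect{b};y)| \le \sum_{w=w_m}^{\infty} |c_w|\,\frac{y^{w+1}}{(w+1)!}.
\]
Because condition \eqref{eq. sufficient and necessary conditions of in bounded theorem of k_m} forces $w_m \ge D$, the second branch of \eqref{eq. new series e_k^D, D>2} applies to every term in the sum, so I can write $c_w = e^D_w \prod_{j=1}^{D}(w+2-j)$ and the product telescopes against $(w+1)!$ to leave $(w+1-D)!$. Substituting $m = w+1-D$ and invoking Corollary~\ref{corollary of e_k^D} to bound $|e^D_{m+D-1}| \le \tilde c_{D-1}$ yields
\[
|\Delta F_2| \;\le\; \tilde c_{D-1}\, y^D \sum_{m=w_m+1-D}^{\infty} \frac{y^m}{m!}.
\]

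Second, I would apply Lemma~\ref{lm. lemma of the factorial of n} in its lower form $m! > (m/e)^m$ to get $y^m/m! < (ey/m)^m$. The second line of \eqref{eq. sufficient and necessary conditions of in bounded theorem of k_m}, namely $w_m \ge e^2y + D$, implies $m \ge e^2y$ throughout the summation range, so $ey/m \le e^{-1}$ and the tail is dominated by a geometric series whose sum is at most $2\,e^{-(w_m+1-D)}$.

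Third, I would bound $\tilde c_{D-1}$ by unwinding the recursion \eqref{eq. recurrsive equation of c_k+1 and c_k given a equal d_k}: since $d^u \le D$,
\[
\tilde c_{D-1} \;=\; c_0 \prod_{w=0}^{D-2}\frac{d^u+w}{w+1} \;\le\; c_0 \prod_{w=0}^{D-2}\frac{D+w}{w+1} \;=\; c_0 \binom{2D-2}{D-1},
\]
and applying Lemma~\ref{lm. lemma of the factorial of n} once more in both directions (upper bound on $(2D-2)!$, lower bound on $((D-1)!)^2$) produces $\binom{2D-2}{D-1} < e^{2D-1}$. Assembling the three steps gives $|\Delta F_2| \le 2 c_0 y^D e^{3D-w_m-2}$; taking logarithms and using $\ln 2 < 1$ shows that the third line of \eqref{eq. sufficient and necessary conditions of in bounded theorem of k_m} is sufficient to force the right-hand side below $\delta f$. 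I expect the main obstacle to be this last step: the tight two-sided use of Lemma~\ref{lm. lemma of the factorial of n} on the central binomial coefficient is exactly what makes the leading coefficient $3D$ (rather than something worse) appear in the third selection condition, and coordinating the case split in \eqref{eq. new series e_k^D, D = 1}--\eqref{eq. new series e_k^D, D>2} with the telescoping identity in step one is something I would verify carefully to avoid an off-by-one error.
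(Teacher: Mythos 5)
Your proposal is correct and follows essentially the same route as the paper: the same chain of envelopes $|c_w|\le|\tilde{c}_w|$, $|e^D_w|\le\tilde{c}_{D-1}\le e^{2D-1}c_0$ (via Lemma~\ref{lm. lemma of the factorial of n} applied two-sidedly to the central binomial coefficient), the same telescoping of $\prod_{j=1}^{D}(w+2-j)$ against $(w+1)!$, and the three selection conditions entering at the same three places. The only variation is cosmetic: where the paper factors out $g(w_m)=y^{w_m+1}/(w_m+1-D)!$ and dominates the remaining ratio series by the Basel sum $1+\tfrac{\pi^2}{6}$, you bound each tail term $y^m/m!\le e^{-m}$ directly (using $m\ge e^2y$) and sum a geometric series, which reproduces the third condition with a marginally better constant.
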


\begin{proof}
	Based on Corollaries \ref{Corollary of d_k}\---\ref{corollary of e_k^D}, we recall and expand the expression $\Delta F_2(\vect{\lambda};\vect{b},y)$ deriving,
	\begin{equation}
		\begin{aligned}
			\Delta &F_2(\vect{\lambda};\vect{b};y) = \sum_{w=w_m}^{\infty}(-1)^w c_w \frac{y^{w+1}}{(w+1)!} \\
			&\le  \sum_{w=w_m}^{\infty} \frac{|c_w|}{\prod_{j=1}^{D}(w+2-j)} \frac{y^{w+1}}{(w+1-D)!} \\
			&\le \sum_{w=w_m}^{\infty} |\tilde{e}^D_w|\frac{y^{w+1}}{(w+1-D)!} \\
			&\le \tilde{c}_{D-1} \frac{y^{w_m+1}}{(w_m+1-D)!} \left(1 + \sum_{w=1}^{\infty} \frac{y^{w}}{\prod_{j=1}^{w} (w_m + 1+ j -D) }  \right) \\
			&< \tilde{c}_{D-1} \frac{y^{w_m+1}}{(w_m+1-D)!} \left(1 + \sum_{w=1}^{\infty} \frac{y^{w}}{ (w_m -D)^w }  \right).
		\end{aligned}
	\end{equation}
	
	On the one hand, the latest infinite summation term in $\Delta F_2(\vect{\lambda};\vect{b},y)$ can be scaled down to the Basel Problem. Let, 
	\begin{equation} \label{eq. Basel equation 1}
		\frac{1}{ \left( \frac{w_m-D}{y} \right)^w } \le \frac{1}{w^2},
	\end{equation}
	then the infinite summation term in $\Delta F_2(\vect{\lambda};\vect{b};y)$ has,
	\begin{equation}
		\left(1 + \sum_{w=1}^{\infty} \frac{y^{w}}{ (w_m -D)^w }  \right) \le \left( 1 + \sum_{w=1}^{\infty} \frac{1}{w ^2} \right)  = 1 + \frac{\pi^2}{6}.
	\end{equation}
	Therefore, the requirement in \eqref{eq. Basel equation 1} can be cast into,
	\begin{equation}
		\begin{aligned}
			\frac{1}{ \left( \frac{w_m-D}{y} \right)^w } \le \frac{1}{w^2} 			& \Rightarrow  \left( \frac{w_m-D}{y} \right)^k \ge w^2 \\
			& \Rightarrow  \ln(w_m-D) - \ln(y) \ge \frac{2\ln(w)}{w}. 	
		\end{aligned}
	\end{equation}
	It's clear that the maximum of $\frac{\ln(w)}{w}$ is $\frac{1}{e}$. Accordingly we need,
	\begin{equation} \label{eq. condition of Basel Problem}
		\begin{aligned}
			\ln(w_m-D) - \ln(y) \ge \frac{2}{e} 
			\Rightarrow   w_m \ge e^{\frac{2}{e}}y + D.
		\end{aligned}
	\end{equation}
	
	On the other hand, let's denote $g(w_m) = \frac{y^{w_m+1}}{(w_m+1-D)!}$ and take a subtraction between $g(w_m)$ and $g(w_m+1)$, which yields,
	$$
	\begin{aligned}
		& g(w_m+1) - g(w_m)\\ 
		&= \frac{y^{w_m+2}}{(w_m+2-D)!} - \frac{y^{w_m+1}}{(w_m+1-D)!}\\
		& = \frac{y^{w_m+1}}{(w_m+1-D)!} \left( \frac{y}{w_m+2-D} - 1\right).
	\end{aligned}
	$$
	As a result, we know that $g(w_m)$ takes its maximum when $w_m=y+D-2$ or $w_m=y+D-1$, and that $g(w_m)$ is decreasing after $w_m > y+D-1$. Besides, its limit is exactly zero when the selected $w_m$ tends to infinity.
	
	Consequently, the error function is bounded by 
	\begin{equation}
		\begin{aligned}
			\Delta &F_2(\vect{\lambda};\vect{b};y) 
			&< \tilde{c}_{D-1} \left( 1+\frac{\pi^2}{6} \right)g(w_m). 
		\end{aligned}
	\end{equation}
	
\begin{table}[htbp]
    \vspace{-10pt}
	\centering
	\caption{\blue{Values of $w_m$ when $g(w_m)$ is less than a certain threshold given finite $y$ and $D$. \label{T1. table of k_m}}}
	\begin{tabular}{ccccc}
		\hline
		\multirow{2}{*}{Conditions} & \multicolumn{4}{c}{ $g(w_m) \le$ } \\ \cline{2-5}
		& $10^{-3}$ & $10^{-5}$ & $10^{-10}$ & $10^{-20}$ \\ 
		\hline
		y = 25($\rho$ = 5), D = 10               & 108   & 109   & 118    & 133    \\
		y = 25($\rho$ = 5), D = 20              & 139   & 141   & 149    & 163    \\
		y = 100($\rho$ = 10), D = 10              & 327  & 327  & 340   & 359   \\
		y = 100($\rho$ = 10), D = 20             & 375  & 378  & 387   & 405   \\ 
		\hline
	\end{tabular}
	
\end{table}
	
	We remark here that given finite $y$ and $D$, the output of $g(w_m)$ decreases sharply after $w_m > y+D-2$. Table~\ref{T1. table of k_m} gives the tendency of its decrement. Therefore, despite the value of $\tilde{c}_{D-1} $, we could just simply take a larger $w_m$ to reduce the truncation error $\Delta F_2(\vect{\lambda};\vect{b};y)$ as $\tilde{c}_{D-1}$ is finite. 
	
	However, in contrast to our previous method of fixing the selected degree $w_m$ beforehand, it would be more flexible to adaptively determine that degree online given the different distribution of relative distance between two nodes at each time. As a result, we need to dig into this error equation further to provide some insights on real-time access for the choice of such degree if a desired threshold $\delta f$ of the truncation error is defined, i.e. find a proper $w_m$ so that $\Delta F_2 < \delta f$.

	Firstly, with the help of the recursive computation version of $\tilde{c}_w$ in \eqref{eq. recurrsive equation of c_k+1 and c_k given a equal d_k}, we can derive an explicit equation of $\tilde{c}_w$,
	\begin{equation}
		\begin{aligned}
			\tilde{c}_{k} & = \frac{k-1+d^u}{k}\tilde{c}_{k-1} \\
			&= \frac{\prod_{i=0}^{k-1} (d^u+i) }{\prod_{j=0}^{k-1} (j+1) } \tilde{c}_0. 	
		\end{aligned}
	\end{equation}
	Since $d^u \le D$, $ \prod_{i=0}^{w-1} (d^u+i) \le \prod_{i=0}^{w-1} (D+i) = \frac{(D+w-1)!}{(D-1)!}$, we therefore know that,	
	\begin{equation}
		\tilde{c}_{w} \le \frac{(D+w-1)!}{k!(D-1)!} \tilde{c}_0, \forall w \in \mathbb{N}^+,
	\end{equation}
	and as a result,
	\begin{equation}
		\tilde{c}_{D-1} \le \frac{(2D-2)!}{(D-1)!(D-1)!} \tilde{c}_0,
	\end{equation}	
	then recalling Lemma~\ref{lm. lemma of the factorial of n} yields
	\begin{equation}
		\begin{aligned}
			\tilde{c}_{D-1} &\le \frac{(2D-2)!}{(D-1)!(D-1)!} \tilde{c}_0 \\
			& \le \frac{e(D-1)^{2D-2}}{ \left( \frac{D-1}{e} \right)^{D-1} \left( \frac{D-1}{e} \right)^{D-1} } \tilde{c}_0 \\
			& \le e^{2D-1} \tilde{c}_0 = e^{2D-1} c_0.
		\end{aligned}
	\end{equation}	
	
	Next, let's consider the term $g(w_m)$. If we denote $\alpha = w_m+1-D$, then $g(w_m) = \frac{y^{\alpha+D}}{\alpha!}$. Again, by recalling Lemma~\ref{lm. lemma of the factorial of n}, we get,
	\begin{equation}
		\begin{aligned}
			 g(w_m) = \frac{y^{\alpha+D}}{\alpha!} 
			 \le \frac{y^{\alpha+D}e^{\alpha}}{{\alpha}^{\alpha}} = \left( \frac{ye}{\alpha} \right)^{\alpha} y^D.
		\end{aligned}		
	\end{equation}
	
	So if an error threshold $\delta f$ is given, in order to limit the truncation error under such a value, we can set 
	\begin{equation}
		\Delta F_2(\vect{\lambda};\vect{b};y) \le e^{2D-1}c_0 (1+\frac{\pi^2}{6}) \left( \frac{ye}{\alpha} \right)^{\alpha} y^D < \delta f.
	\end{equation} 
	Since $1+\frac{\pi^2}{6}<e$, the above requirement can be further rewritten as  
	\begin{equation}
		\begin{aligned}
			e^{2D} \left( \frac{ye}{\alpha} \right)^{\alpha} < \frac{\delta f}{c_0 y^D}.
		\end{aligned}
	\end{equation}
	
	Let us denote $\tilde{g}(\alpha) =\left(  \frac{ye}{\alpha} \right)^\alpha$, then similar to the analysis of $g(k_m)$, it's easy to show that $\tilde{g}(\alpha)$ is monotonically decreasing when $\alpha \ge ey, \forall \alpha \in \mathbb{N}^+$. Therefore, we propose two stages to determine the proper degree $\alpha$,
	
	1) if $\alpha \ge e^2y \Rightarrow w_m \ge e^2y + D - 1$, then   
	\begin{equation}
		e^{2D} \left( \frac{ye}{\alpha} \right)^\alpha \le e^{(2D-\alpha)},
	\end{equation}
	
	2) we can let $e^{(2D-\alpha)} < \frac{\delta f}{c_0y^D}$, then
	\begin{equation}
		\begin{aligned}
			& \alpha > 2D - ln \left( \frac{\delta f}{c_0y^D} \right) \\
			\Rightarrow & w_m > 3D - ln \left( \frac{\delta f}{c_0y^D} \right) - 1.
		\end{aligned}
	\end{equation}
	
	Considering the conditions both in the Basel Problem \eqref{eq. condition of Basel Problem} and in Corollary~\ref{Corollary of d_k}, we can summarize the set of sufficient conditions as listed in \eqref{eq. sufficient and necessary conditions of in bounded theorem of k_m}.
	
	This ends the proof of Theorem \ref{Th. boundedness of truncation error}.
\end{proof}


\subsection{Translational Approximate Covariance Expansion} \label{subsec. TRACE}

Theorem~\ref{Th. boundedness of truncation error} demonstrates that one can reduce the truncation error introduced by the finite-term approximation in \eqref{eq. PSE cumulative probability function, finite terms } by increasing the selected degree $w_m$. The rules of determining $w_m$ show its relationship between the shape of the covariance, the range limitation and also the desired error threshold. It presents a guaranteed and numerically efficient computation method for the probability of connection. However, all aforementioned analyses are built upon a very important assumption in \eqref{eq. sufficient and necessary conditions of in bounded theorem of lamda}, that is, the proposed finite-time approximation can only be efficient when the given relative covariance is well-constructed i.e. its eigenvalues are greater than $\frac{1}{2}$. 

This is a very critical constraint since in real applications the robot beliefs continue to be updated after each new control action or whenever a new measurement is obtained. Demonstrated as the second disadvantage in our previous work \cite{zhang2020connectivity}, this constraint will cause fatal prediction failure when using the finite-term approximation in \eqref{eq. PSE cumulative probability function, finite terms } for the case where the relative covariance is too small or near singularity and the relative distance of the two corresponding robots $d_{12} = \left\lVert\vect{p}_1 - \vect{p}_2\right\rVert_2$ is very close to the range threshold $\rho$.   

Though this constraint is partly relieved by the Approximate Covariance Expansion (ACE) proposed in our prior research, ACE suffers from lots of approximations and hence the probability precision is corrupted and degenerated. Regarding these problems, we further propose an improved version of ACE, where the expansion is transitioned to a new and collinear center instead of dilating the covariance at its ellipse center. This approach is hereafter called the Translational Approximate Covariance Expansion (TRACE). A geometric depiction of TRACE is shown in Fig. \ref{fig. Trace}, where the red solid ellipse $E_{r1}$ represents the $3\delta$ confidence area corresponding to the covariance matrix $\Sigma = \Sigma_1 +\Sigma_2$ of $\Delta \bar{P}$. Taking the maximum range of the measurement device to be $\rho_{1}$, its ranging circle from the origin $O$ with radius $\rho_{1}$ intercepts with the $3\delta$ area of $E_{r1}$. There exists a smaller ellipse $E_{b1}$ that is tangent to this ranging circle. Therefore the probability $p(Y<\rho^2_1)$ can be indicated by 
\begin{equation}
	p(Y<\rho^2_1) =  \frac{S(\widearc{ABC} \cap E_{r1})}{S(E_{r1})}, 
\end{equation} 
where $S(\cdot)$ represents the shape's area of input arguments. Now suppose that the covariance $\Sigma$ is very small or near singular, then we enlarge the covariance $\Sigma$ by a coefficient $\beta \ge 1$ and translate it into the solid red ellipse $E_{r2}$.

\begin{figure}[htbp] 
	\centering
	\includegraphics[width=0.7\linewidth]{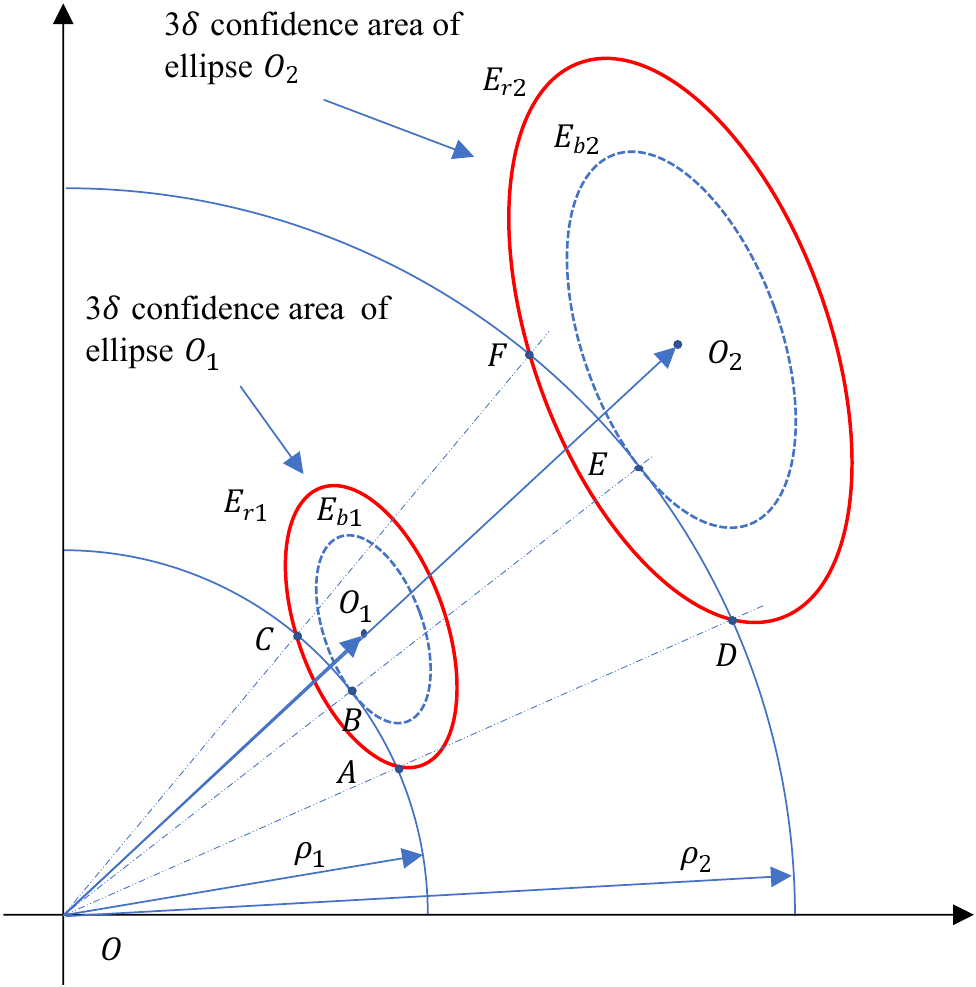}
	\caption{Geometric illustration of TRACE.}
	\label{fig. Trace}
	\vspace{-10pt}
\end{figure}

\begin{algorithm}[htbp]
	\SetAlgoLined
	\KwResult{ $\vect{\mu}_2,\vect{\Sigma}_2,\rho_2$ \; }
	\KwData{ $\vect{\mu}_1,\vect{\Sigma}_1,\rho_1$ \;}
	
	$\lambda_{min} \leftarrow$ compute the minimum eigenvalue of $\vect{\Sigma}_1$  \;
	
	\eIf{$\lambda_{min} < \frac{1}{2}$} {
		$\beta = \frac{1}{\lambda_{min}},$
		$\vect{\Sigma}_2 = \beta \vect{\Sigma}_1,$
		$\vect{\mu}_2 = \sqrt{\beta} \vect{\mu}_1,$ 
		$\rho_2 = \sqrt{\beta}  \rho_{1}$ \;
	}
	{		$\vect{\Sigma}_2 =  \vect{\Sigma}_1,$
		$\vect{\mu}_2 =  \vect{\mu}_1$ 
		$\rho_2 =  \rho_{1}$ \;
	}
	\textbf{Return} $\vect{\mu}_2,\vect{\Sigma}_2,\rho_2$
	\caption{TRACE}
	\label{alg. TRACE}
\end{algorithm}

In contrast to ACE, the expanded ellipses are collinear with the original ellipses, i.e. $E_{r2}$ and $E_{b2}$ in TRACE are moved to a new center $O_2$, which is collinear with the original center $O_1$ and the coordinate origin $O$. In doing so, we can simply derive the following equation according to basic geometric similarity,
\begin{equation}
	\frac{S(\widearc{ABC} \cap E_{r1})}{S(E_{r1})} =  \frac{S(\widearc{DEF} \cap E_{r2})}{S(E_{r2})},
\end{equation} 
if $\rho_2 = \sqrt{\beta} \rho_1$. 

\begin{algorithm}[tbp]
	\SetAlgoLined
	\KwResult{ \\
		$ p(Y<\rho^2)$: future connection probability $p(s_{12}^{k+l}=1)$ \;} 
	\KwData{ \\ 		
		$ b(\vect{X}^{k+1:k+L|k}) \sim N(\vect{\bar{X}}^{k+1:k+L|k},\vect{\Sigma})$: propagated belief \; 
		$\rho$ : ranging maximum of sensors\; 
		$\delta f$ : desired theoretical limit for the truncation error. }
	
	Extract $\vect{p}_1^{k+l} \sim N(\vect{\bar{p}}_1^{k+l|k}, \vect{\Sigma}_1),\vect{p}_2^{k+l} \sim N(\vect{\bar{p}}_2^{k+l|k},\vect{\Sigma}_2),$ from $b(\vect{X}^{k+1:k+L|k})$:
	
	Compute $\Delta \vect{p}^{k+1}_k\sim N(\vect{\mu},\vect{\Sigma})$ by: \\
	
	 $\qquad \vect{\mu} = \vect{\bar{p}}_1^{k+1|k} - \vect{\bar{p}}_2^{k+1|k}, \vect{\Sigma} = \vect{\Sigma}_1 + \vect{\Sigma}_2 -2\vect{\Sigma}_{12}$\\
	
	$[\rho_{3\delta}^{min} \rho_{3\delta}^{max}] \leftarrow 3\delta$ area identification \cite{zhang2020connectivity} \;
	
	\uIf{$\rho < \rho_{3\delta}^{min}$}{
		\textbf{Return} $p(Y<\rho^2) = 0$ \;
	} 
	\ElseIf{$\rho > \rho_{3\delta}^{max}$}{
		\textbf{Return} $p(Y<\rho^2) = 1$ \;
	} 
	$\left[ \vect{\mu},\vect{\Sigma},\rho \right] \leftarrow \text{TRACE}(\vect{\mu},\vect{\Sigma},\rho)$ \tcc*{algorithm~\ref{alg. TRACE}}
	
	$\vect{\lambda} \leftarrow$ compute the eigenvalues of $\vect{\Sigma}$\;
	
	$\vect{P} \leftarrow$ compute the orthogonal matrix composed of the eigenvectors of $\vect{\Sigma}$ \;
	$\vect{b} = P^T\vect{\Sigma}^{-\frac{1}{2}}\vect{\mu}$,	$y = \rho^2 $, $w_m = 50$\;
	
	\For{$r=0;r\le w_m;r=r+1$}{
		$d_r \leftarrow $ using \eqref{eq. computation of the coefficient d_k} for $r = 1,...,w_m$\;
		$c_r \leftarrow $ using \eqref{eq. computation of the coefficient c_k} by using $d_r$ for $r = 0,...,w_m$\;
		$y_r = (-1)^r \dfrac{y^{r+1}}{\Gamma(r+2)}, r = 0,...,k_m $  \tcc*{$y_r$ is the right side of \eqref{eq. PSE cumulative probability function, finite terms } without $c_r$}
		\If(\tcc*[h]{adaptive strategy for selecting $w_m$}){$d_r$ reaches its maximum}{
			$D = d_r$,  $w_{m1} = \frac{1}{b_1^2} + \frac{1}{2\lambda_1 - 1}$ ,	$w_{m2} = \frac{1}{b_2^2} + \frac{1}{2\lambda_2 - 1}$, $w_{m3} = e^2y + D$ , $w_{m4} = 3D - ln \left( \frac{\delta f}{c_0y^D} \right) - 1$ \;
			
			$w_m = \text{max}(w_{m1},w_{m2},w_{m3},w_{m4},w_m)$ \;
		}
	}
	$ F_p(\vect{\lambda};\vect{b};y) \leftarrow $  \eqref{eq. PSE cumulative probability function, finite terms } using the series $c_r, y_r$ \;
	
	\textbf{Return} 	$p(Y<\rho^2) = F_p(\vect{\lambda};\vect{b};\rho)$
	\caption{APSE algorithm for CPP}
	\label{alg. probability-prediction algorithm}
\end{algorithm}

\blue{The pseudocode of TRACE is presented in Algorithm~\ref{alg. TRACE}.} It should be noted that there are various ways to choose a proper expansion coefficient $\beta$. In the following algorithm, we calculate $\beta$ by expanding the covariance matrix to a new one whose smallest eigenvalue is no less than 1. Although TRACE is proposed to deal with the case of a singular or small covariance, we could also revise it inversely by narrowing a larger ellipse at $O_2$ to a smaller but `well-constructed' one at $O_1$. This way we can avoid a very large ranging threshold $\rho$ in computation because $y=\rho^2$ is heavily used in our finite-term approximation in \eqref{eq. PSE cumulative probability function, finite terms }. A large $\rho$ may also introduce and accumulate rounding errors from numerical computation on a hardware platform.  

\subsection{Adaptive Power Series Expansion (APSE)}

So far, we have presented the two main parts of our algorithm. In Subsection~\ref{subsec. Stability Analysis} we provided a theoretical derivation showing that Theorem~\ref{Theorem: Power series expansion of quadratic form of normal distribution} can accurately calculate the probability of connection of two nodes as long as the covariance of the relative distance is proper and most importantly that the degree of the finite-term approximation can be adaptively selected. Then in Subsection~\ref{subsec. TRACE} we provided a novel method for dealing with the constraint of the covariance to extend our method to more general cases. Jointly, we conclude the pseudocode for computing the probability of a future connection in Algorithm~\ref{alg. probability-prediction algorithm}.

As shown by the top-down order, this algorithm takes as inputs the estimated mean $\vect{\mu}$, covariance $\vect{\Sigma}$ of the relative distance distribution of two nodes and the ranging threshold $\rho$. It finally returns the probability of the event $Y = d_{12}^2 < \rho^2$.  Lines $4-8$ are the content of $3\delta$ regulation, which has been developed and explained in our previous work \cite{zhang2020connectivity}. The purpose of preserving this part is to reduce the computational complexity when the relative distance computed from the estimated means lies far beyond the $99.7\%$ sample coverage area (i.e. $3\delta$ area). So we just simply assign the probability to be 1 or 0 and accept the consequently possible $0.03\%$ error.

Moving on, the TRACE algorithm is called in line $10$ to derive a proper covariance $\vect{\Sigma}$ whose eigenvalues meet the critical constraint in \eqref{eq. sufficient and necessary conditions of in bounded theorem of lamda}. Meanwhile, the mean vector $\vect{\mu}$ and ranging threshold $\rho$ would be updated accordingly. Then lines $11-22$ are the computation of our proposed finite-term approximation of Theorem~\ref{Theorem: Power series expansion of quadratic form of normal distribution}. Within the pseudocode, we firstly give an initial value of the selected degree $w_m = 50$ so that we can find the maximum of the series $d_w$. Note that the initialization of $w_m$ presented here is not necessary but is instrumental in the pre-allocation of storage. Here, we suppose that $d_w$ can reach its peak within the first 50 elements and therefore our code could enter lines $18-21$. Here are the main results of Theorem~\ref{Th. boundedness of truncation error} for adaptively choosing a proper degree $w_m$ for the summation. We note that this part only needs to be run once and since we compare all the candidates of $w_m$ with the initial at line $20$ the total number of summations of our finite-term approximation in \eqref{eq. PSE cumulative probability function, finite terms } will be at least 50. 

\section{Simulations and results}

In this section, we present extensive analyses of the proposed methods and perform comparisons against benchmark methods from related works. 
We first evaluate the performance of APSE in Algorithm~\ref{alg. probability-prediction algorithm} in terms of accuracy and computational efficiency. \blue{Then, two instances of the CLAP problem are investigated. The first is drawn from our prior work \cite{zhang2020connectivity} and formulates the problem as a one-step finite state MDP (OS-MDP) where the state space consists of all possible network topologies and the reward is a function of the leader's localization uncertainty. The second is derived from a patrolling task and we study the impact of a longer planning horizon by using a generalized belief space formulation (GBS).} 

In addition, to show the effectiveness of the proposed APSE algorithm in active planning problems, we compare against several prevailing approaches that solve the CPP in existing works, these are

1) a deterministic \emph{Bernoulli} model in Algorithm~\ref{alg. Bernoulli distribution for disk communication model}. 

2) a \emph{Linear} distribution in Algorithm~\ref{alg. linear distribution for disk communication model}. 

3) a \emph{Random} sampling-based distribution in Algorithm~\ref{alg. RandomSampleDistributionAlg}.

\begin{algorithm}[htbp]
	\SetAlgoLined
	\KwResult{ $p(Y<\rho^2)$  }
	\KwData{ $\vect{\mu}$, $\rho$ \;}
	
	$Pr(Y<\rho^2) = 0$ \tcc*{Initialized with 0} 
	\If{ $||\vect{\mu}||_2 \le$ $\rho$}{
		$p(Y<\rho^2) = 1 $
	}
	
	\textbf{Return} $p(Y<\rho^2)$
	\caption{Bernoulli Model}
	\label{alg. Bernoulli distribution for disk communication model}
\end{algorithm}

\begin{algorithm}[htbp]
	\SetAlgoLined
	\KwResult{ $p(Y<\rho^2)$  }
	\KwData{ $\vect{\mu}$, $\vect{\Sigma}$, $\rho$ \;}
	
	$[\rho_{3\delta}^{min}, \rho_{3\delta}^{max} ] \leftarrow$ $3\delta$ area identification; \\ 
	\uIf{$\rho \le \rho_{3\delta}^{min}$ }{
		\textbf{Return} $Pr(Y<\rho^2) = 1$
	}
	\ElseIf{$\rho > \rho_{3\delta}^{min}$ }{
		\textbf{Return} $Pr(Y<\rho^2) = 0$
	}
	
	\textbf{Return} $p(Y<\rho^2) = \frac{1}{\rho_{3\delta}^{max}-\rho_{3\delta}^{min}}(\rho - \rho_{3\delta}^{min})$ ; \\
	\caption{Linear model}
	\label{alg. linear distribution for disk communication model}
\end{algorithm}

\begin{algorithm}[htbp]
	\SetAlgoLined
	\KwResult{ $p(Y<\rho^2)$  }
	\KwData{ $\vect{\mu},\vect{\Sigma},\rho, dim$ \;}
	
	Count = 0 \;
	\While{$k \le dim$}{
		data $\leftarrow$ RandomlySample($\vect{\mu},\vect{\Sigma}$) \;
		\If{ $||data||_2$ $ \le \rho$}
		{Count = Count + 1 \;}
	}
	\textbf{Return} $p(Y<\rho^2) = \frac{Count}{dim}$\;
	\caption{Random sampling-based model}
	\label{alg. RandomSampleDistributionAlg}
\end{algorithm}

Note that the Bernoulli distribution has been widely applied to many existing active planing problems \blue{by taking the well known ML assumption \cite{platt2010belief} }.  This is due to the similarity it offers to the estimation problem, whereby the standard estimation engines (for both the filtering method and smoothing formulation) can be directly introduced to recover the future beliefs given a control candidate. As a result the corresponding cost can be easily derived. The linear distribution, however, is a partly empirical formula which does not consider the exact motion and measurement models. \cite{indelman2015planning} came up with this method in their problem formulation to show the influence of uncertain observations but no further detail is given about how the active planning framework can deal with such a probabilistic future measurement and communication topology. Lastly, the random sample distribution is a completely intuitive method which uses the frequency of a connection to represent its probability \blue{as used in \cite{pathak2018unified} }. However, the number of samples may lead to different performance in terms of predictive precision and computational tractability. 


The code for all simulations was implemented in Matlab R2019a and executed on a Windows machine with an i7-9750H @ 2.6 GHz processor and 8 GB of memory.

\begin{figure}[htbp] 
	\centering
	\includegraphics[width=0.8\linewidth]{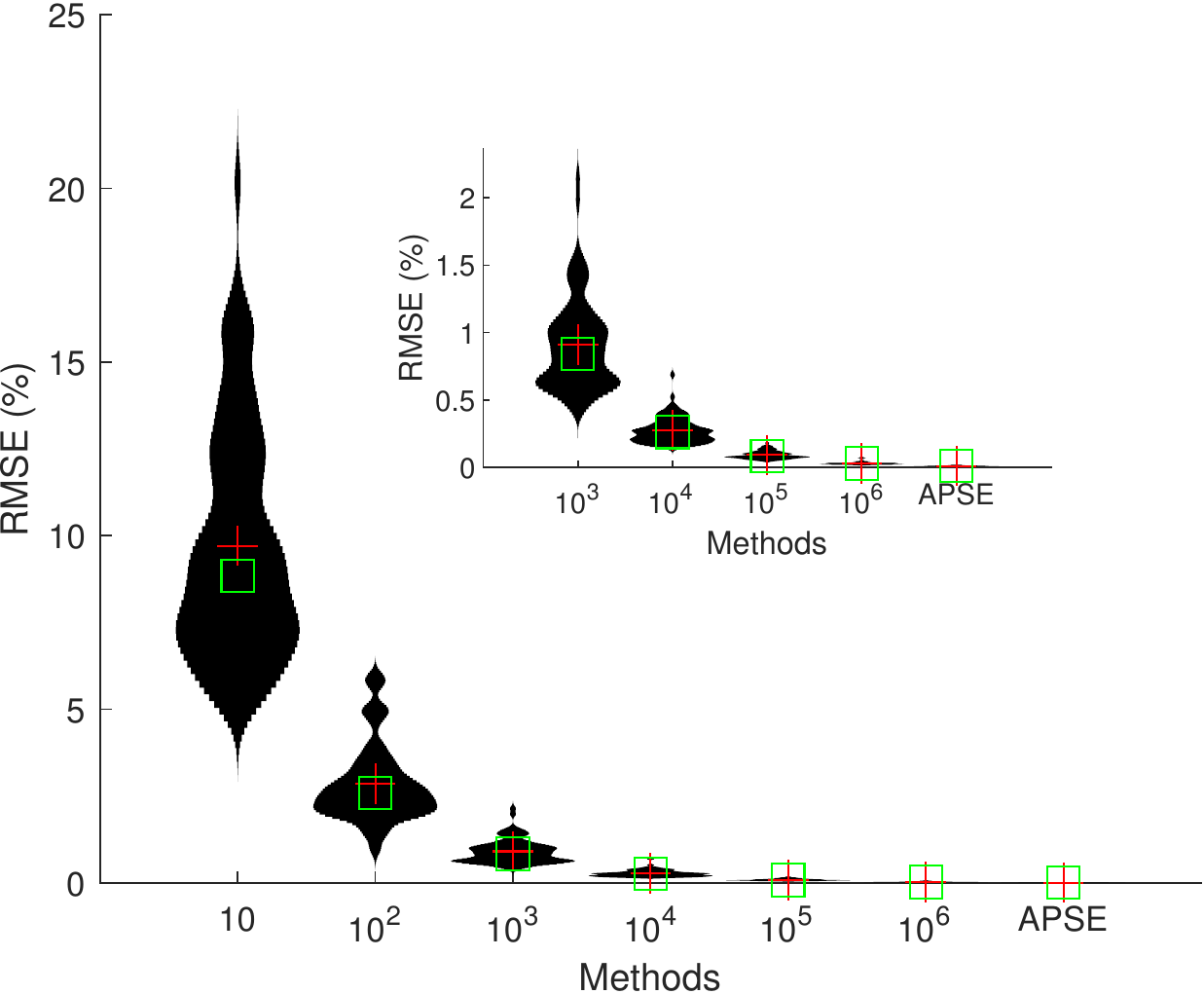}
	\caption{RMSE distribution over 200 trials.}
	\vspace{-15pt}
	\label{fig. all case RMSE}
\end{figure}
\begin{figure}[htbp] 
	\centering
	\includegraphics[width=0.8\linewidth]{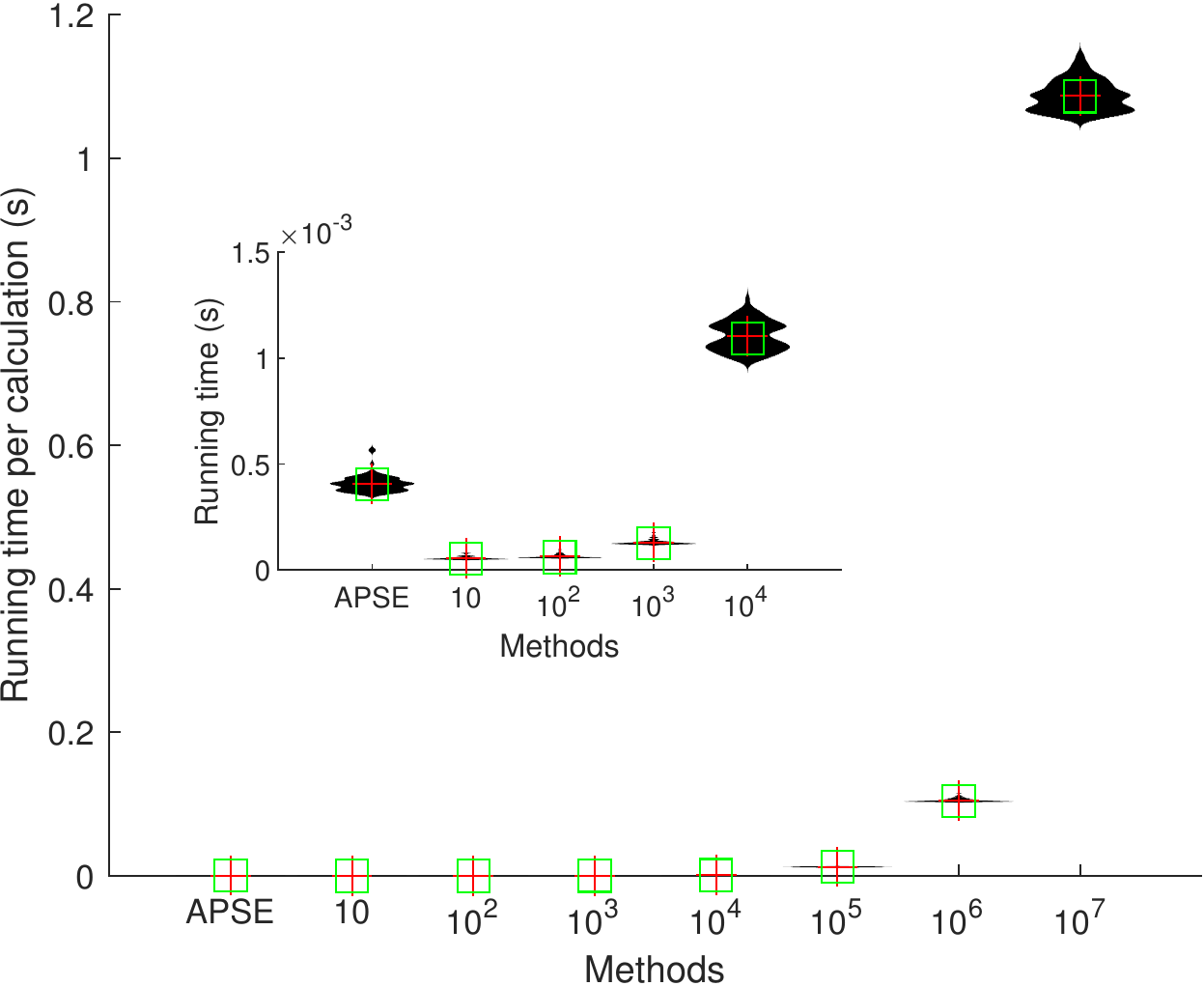}
	\caption{Mean run time distribution over 200 trials.}
	\vspace{-15pt}
	\label{fig. all case running time}
\end{figure}

\subsection{Performance of APSE}
In this subsection, we test our proposed APSE in Algorithm~\ref{alg. probability-prediction algorithm} to evaluate its probability accuracy and computational complexity. By comparing against existing methods, we will show that one great advantage of APSE is its high precision. Since the Bernoulli and Linear models are merely two rough guesses of the real distribution of the connection, we only compare our proposed algorithm against the random sampling-based method with different sampling degrees.

Before moving on, we remark here that the implementation of our proposed probability algorithm is realized using the for-loop in MATLAB, while the comparison methods using random sampling apply the built-in function \texttt{mvrnd}. Therefore, the computation time performance of our proposed APSE algorithm, presented in this subsection, could be further improved if more optimization can be introduced.

The setup is given as follows. We test two nodes $A$ and $B$ whose estimated means are fixed to $\vect{p}_A = [0.5,1]^T$ and $\vect{p}_B = [2,2.5]^T$ and the mean distance between them is $||\vect{p}_A-\vect{p}_B||_2 \approx 2.21$. In each simulation trial, we do the following actions:

1) Randomly generate two symmetric and positive matrices $\vect{P}_A>0$, $\vect{P}_B>0$ as the covariance of nodes $A$ and $B$. 

2) Experimental method: for each range $\rho$ in the sets $\vect{S}_{\rho} := \{ 0.1:0.1:6 \}$,   $\vect{p}_A,\vect{p}_B,\vect{P}_A,\vect{P}_B$ and $\rho$ are taken as the input of APSE with the desired accuracy $\delta f = 10^{-10}$. We record the output probability and run time corresponding to each $\rho \in \vect{S}_{\rho}$ into the set $\vect{P}_{APSE}$ and $\vect{T}_{APSE}$, whose sizes are both 60.

3) Comparison methods: Random sampling (Algorithm~\ref{alg. RandomSampleDistributionAlg}) is tested for six different sampling degrees i.e. $dim = [10, 10^2, 10^3,10^4,10^5,10^6,10^7]$. For each $dim$, repeat step 2) and record six sets of probabilities $\vect{P}_{dim}$ and run times $\vect{T}_{dim}$. When $dim = 10^7$, the probability is treated as the ground truth i.e. $\vect{P}_{dim=10^7} = \vect{P}_{truth}$.

4) Compute the root mean squared error (RMSE) for each method in $\{ APSE, 10, 10^2, 10^3,10^4,10^5,10^6,10^7 \}$:
$$RMSE_{method} = \sqrt{\sum_{i=1}^{60}\left[\vect{P}_{method}(i) - \vect{P}_{truth}(i)\right]^2},$$
and compute the mean run time per calculation as:
$$MeanRunTime = \frac{1}{60} \sum_{i=1}^{60} (\vect{T}_{method}(i)).$$

The statistical distribution of both RMSE and mean run time per calculation over 200 trials are presented in Fig.~\ref{fig. all case RMSE} and Fig.~\ref{fig. all case running time}, respectively. It is not surprising to see that the comparison group, the random sampling-based algorithm, produces higher calculation accuracy but worse computational complexity as the sampling degree gets larger. The experimental group, the APSE Algorithm~\ref{alg. probability-prediction algorithm}, surpasses all the comparison groups in terms of probability precision, which is indicated by RMSE. At the same time, it achieves a relatively moderate computational complexity, remaining in the same order of magnitude as the random sampling algorithm with degree between $10^3$ and $10^4$.

\color{blue}
\subsection{CLAP with OS-MDP}

As depicted in Fig.~\ref{fig. init}, this simulation follows a similar setup to our previous work \cite{zhang2020connectivity}. Here we consider a MRS consisting of one leader and four followers in a GNSS-limited and noisy environment. All robots only know their initial positions exactly (as shown in Table~\ref{tab. initialPosSimulation1}) and need to localize themselves due to the existence of motion noise. A point-mass motion model corrupted with Gaussian noise is considered:
$$\vect{p}_i^{k+1} = \vect{p}_i^{k} + \vect{u}_i^{k} + \vect{w}_i,$$
where $\vect{w}_i \sim \mathcal{N}(\vect{0},\vect{R}),\forall i \in \mathcal{V}$ with known information matrix $\vect{R} = diag([0.04, 0.04])$. The leader is initially located at the bottom-left corner and is tasked to reach its destination by traversing the mission plane along a predefined trajectory. The leader's motion strategy used in this simulation is calculated simply by heading toward the destination i.e.
$$\vect{u}_L = u_{max} \frac{\vect{p}_{des} - \vect{\hat{p}}}{||\vect{p}_{des} - \vect{\hat{p}}||_2},$$
where $\vect{p}_{des}$ is the position of the destination, $\vect{\hat{p}}$ is the estimated position of the leader and $u_{max}$ is the limited moving distance at each time step. 

\begin{figure}[htbp]
	\centering
	\includegraphics[width=0.48\textwidth]{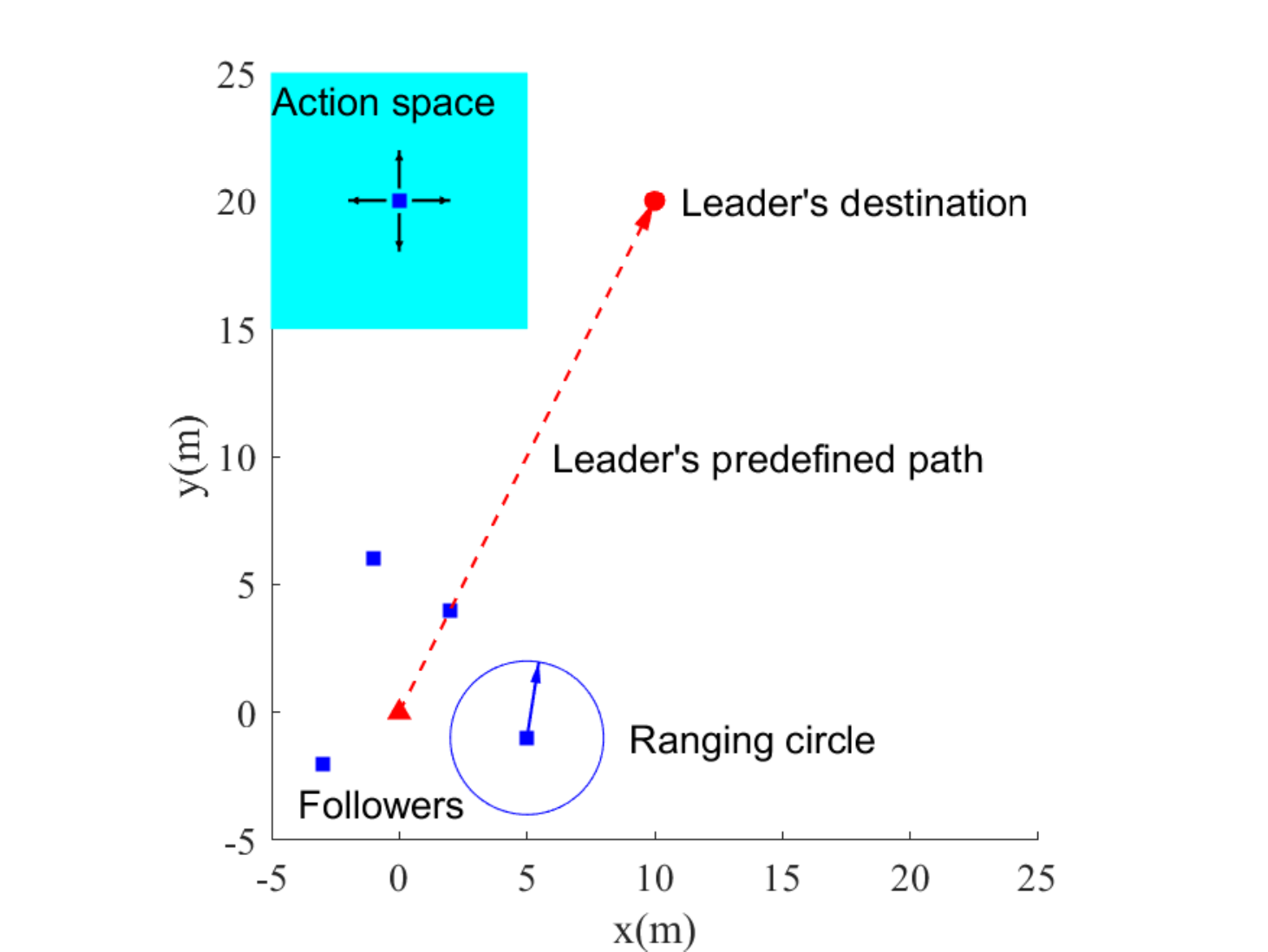}
	\caption{\textcolor{blue}{Initial configuration of the CLAP problem with an OS-MDP active planning framework.}}
	\label{fig. init}
	\vspace{-10pt}
\end{figure}

\begin{table}[htbp]
    \centering
    \caption{\blue{Initial positions for the leader and four followers.}}
    \label{tab. initialPosSimulation1}
    \begin{tabular}{cccccc}
    \hline
    Nodes  & L         & F1         & F2        & F3         & F4          \\
    Pos(m) & {[}0;0{]} & {[}-1;6{]} & {[}2;4{]} & {[}5;-1{]} & {[}-3;-2{]} \\ \hline
    \end{tabular}
\end{table}

Four followers are deployed to help reduce the leader's accumulated localization uncertainty during the mission by optimizing their motion sequences. The action space for each follower is set as,
$$\vect{A}_i = {[0;1], [1;0], [0; -1], [-1;0]}.$$ 
All robots equip ranging devices whose maximum measurement distance is set $\rho = 3 m$. As no global localization is available in the mission space and no landmarks or anchors are available, the relative measurements between robots are the only information source for robots to correct their localization uncertainty. The measurement model used in the simulation is exactly \eqref{eq. observation model for robots_realtiveDistance} with known information matrix $Q = 0.01$ and the active planning framework is the OS-MDP \cite{zhang2020connectivity} with a one-step-ahead planning horizon. The immediate cost is computed as the leader's localization uncertainty,
$$c^l\left( b(\vect{x}^{k+1,\vect{U}^k})  \right) = trace \left(  \vect{\Sigma}_{\vect{p}^{k+1}_L} \right).$$

All four methods of predicting the probability of future connectivity are tested for 50 trials and the statistical results are shown in Figs~\ref{fig. StatisticalResultsCLAPwithOS-MDP}-\ref{fig. MeanUncertaintiesOneFigure-CLAP-OS-MDP}. Besides, the mean and standard deviation of the trace of leader's localization covariance at the final step are also calculated and listed in Table~\ref{tab. mean-std-improvement-rate}.

\begin{table}[htbp]
\centering
\caption{\textcolor{blue}{Final trace of the leader's localization covariance.}}
\label{tab. mean-std-improvement-rate}
\begin{tabular}{ccc}
\hline
Methods   & \multicolumn{1}{c}{Mean $\pm$ std (m)}   & APSE improvement  \\ \hline
Bernoulli & 0.4539 $\pm$ 0.1914                & 20.8\%     \\
Linear    & 0.4114 $\pm$ 0.1774             & 12.6\%          \\
Random    & 0.3958 $\pm$ 0.0818               & 9.2 \%        \\
APSE      & \textbf{0.3595 $\pm$ 0.0719 }         & --             \\ \hline
\end{tabular}
\end{table}

\begin{figure}
	\centering
	\begin{subfigure}[b]{0.24\textwidth}
	    \centering
		\begin{subfigure}[b]{1\textwidth}
			\centering
			\includegraphics[width=\textwidth]{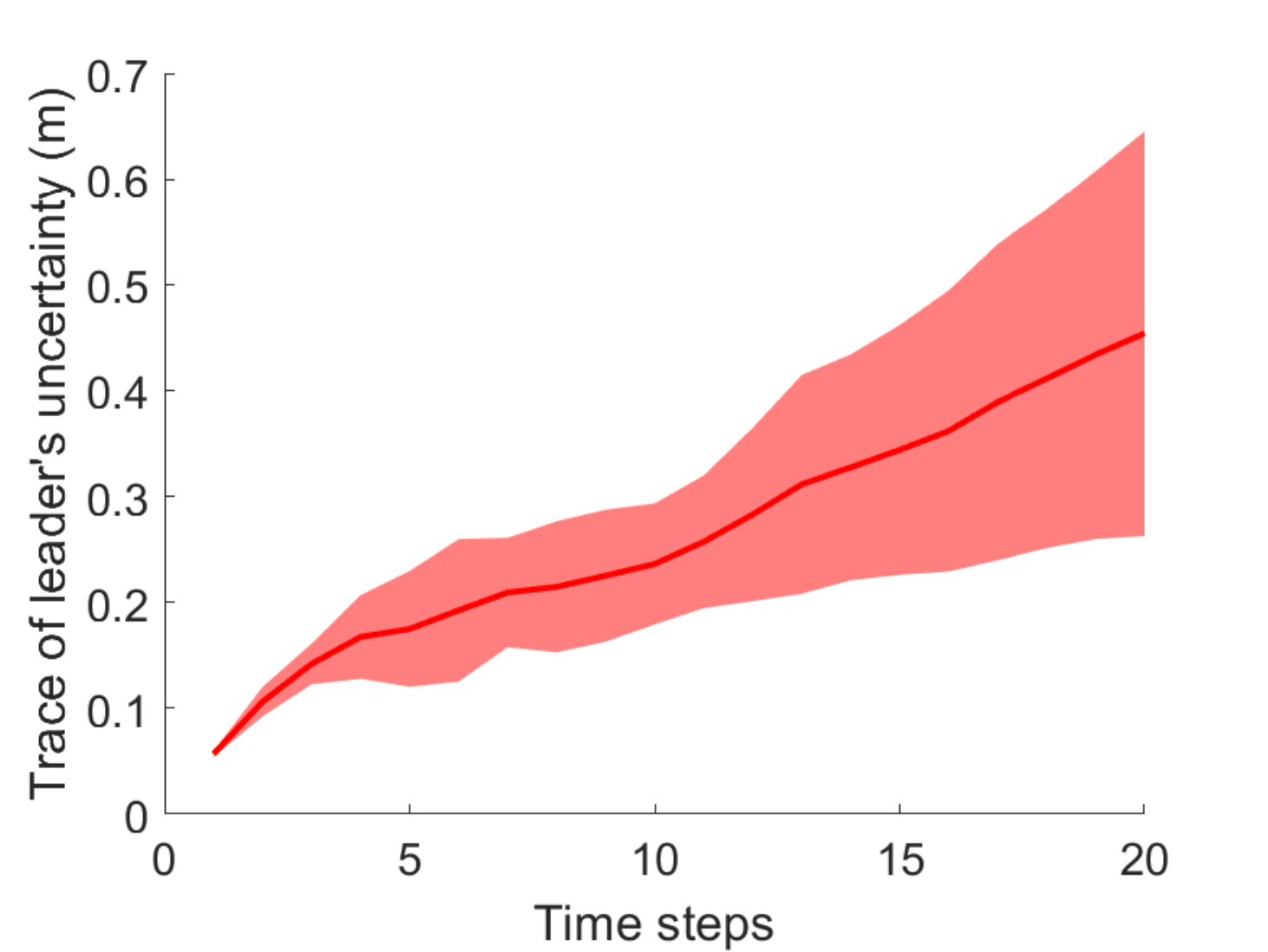}
			\caption{Bernoulli}
		\end{subfigure}
		\null \hfill \\
		\begin{subfigure}[b]{1\textwidth}
			\centering
			\includegraphics[width=\textwidth]{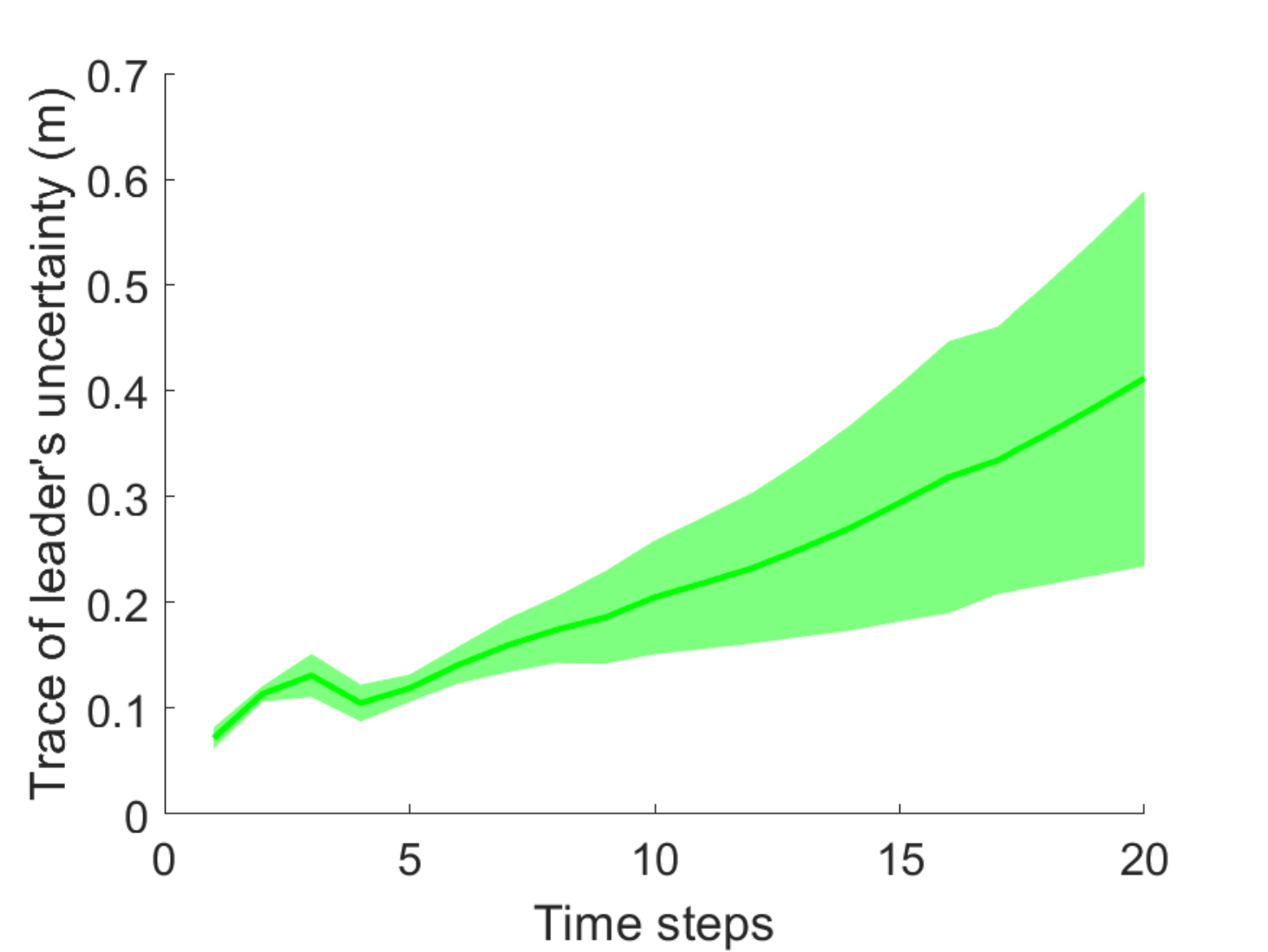}
			\caption{Linear}
		\end{subfigure}
		
	\end{subfigure}
	\hfill
	\begin{subfigure}[b]{0.24\textwidth}
	    \centering
		\begin{subfigure}[b]{1\textwidth}
			\centering
			\includegraphics[width=\textwidth]{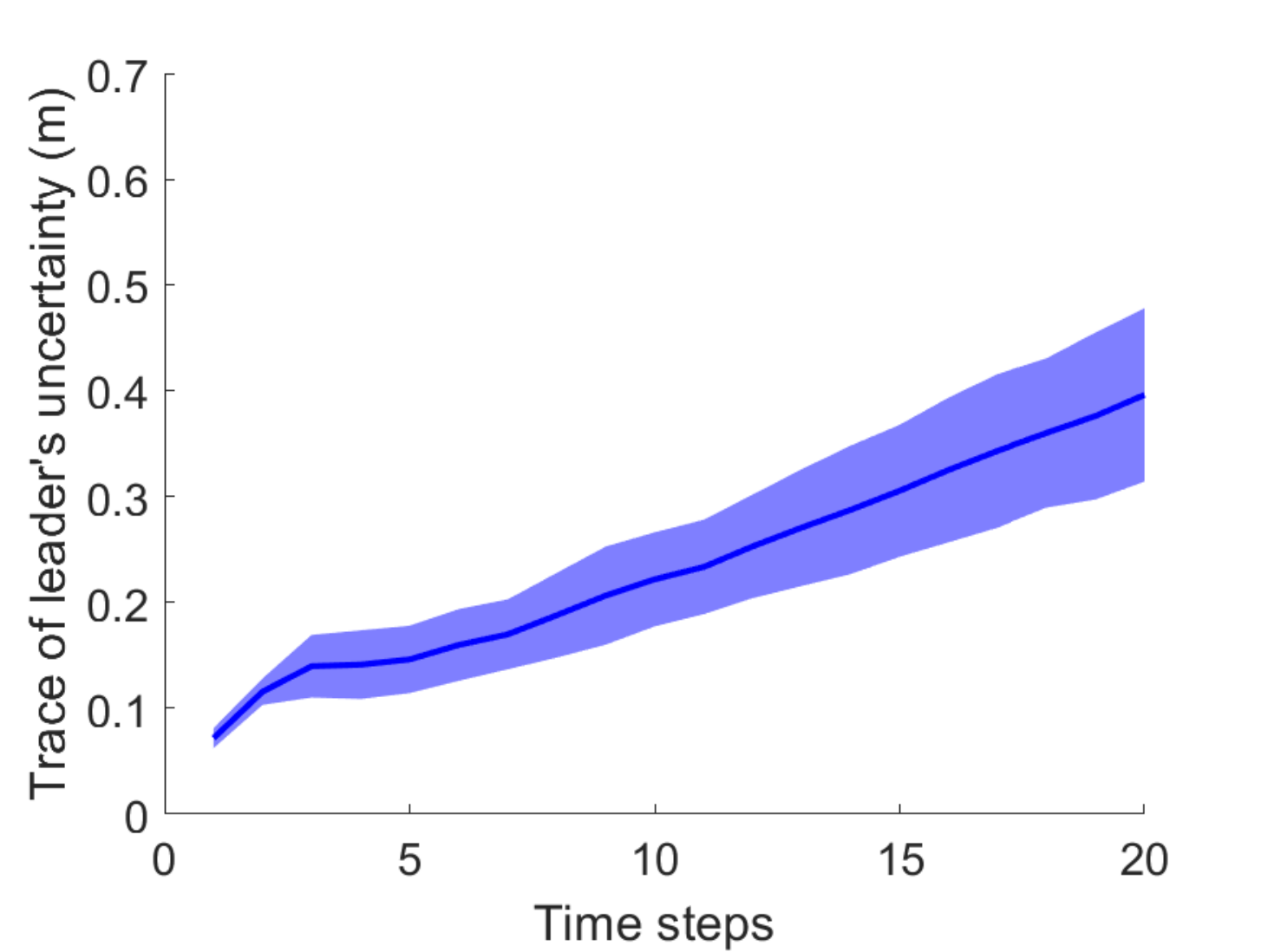}
			\caption{Random with $dim=10^4$}
		\end{subfigure}
		\null \hfill \\
		\begin{subfigure}[b]{1\textwidth}
			\centering
			\includegraphics[width=\textwidth]{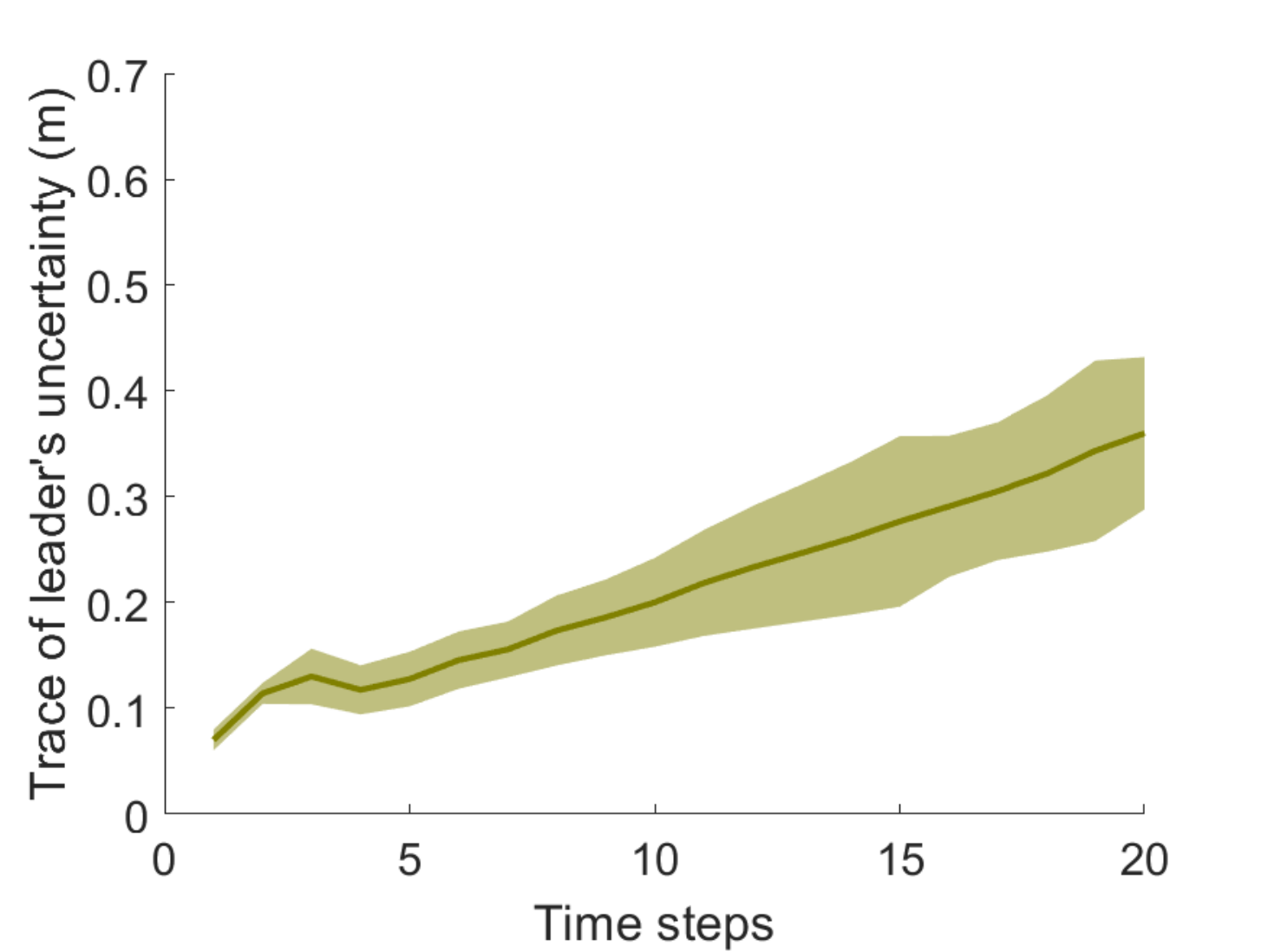}
			\caption{APSE}
		\end{subfigure}
	\end{subfigure}
	\caption{ \blue{(a)-(d) Statistical distribution of the leader's localization uncertainty over 50 trials for CLAP with OS-MDP.}}
	\label{fig. StatisticalResultsCLAPwithOS-MDP}
	\vspace{-5pt}
\end{figure}

\begin{figure}[htbp]
	\centering
	\includegraphics[width=0.45\textwidth]{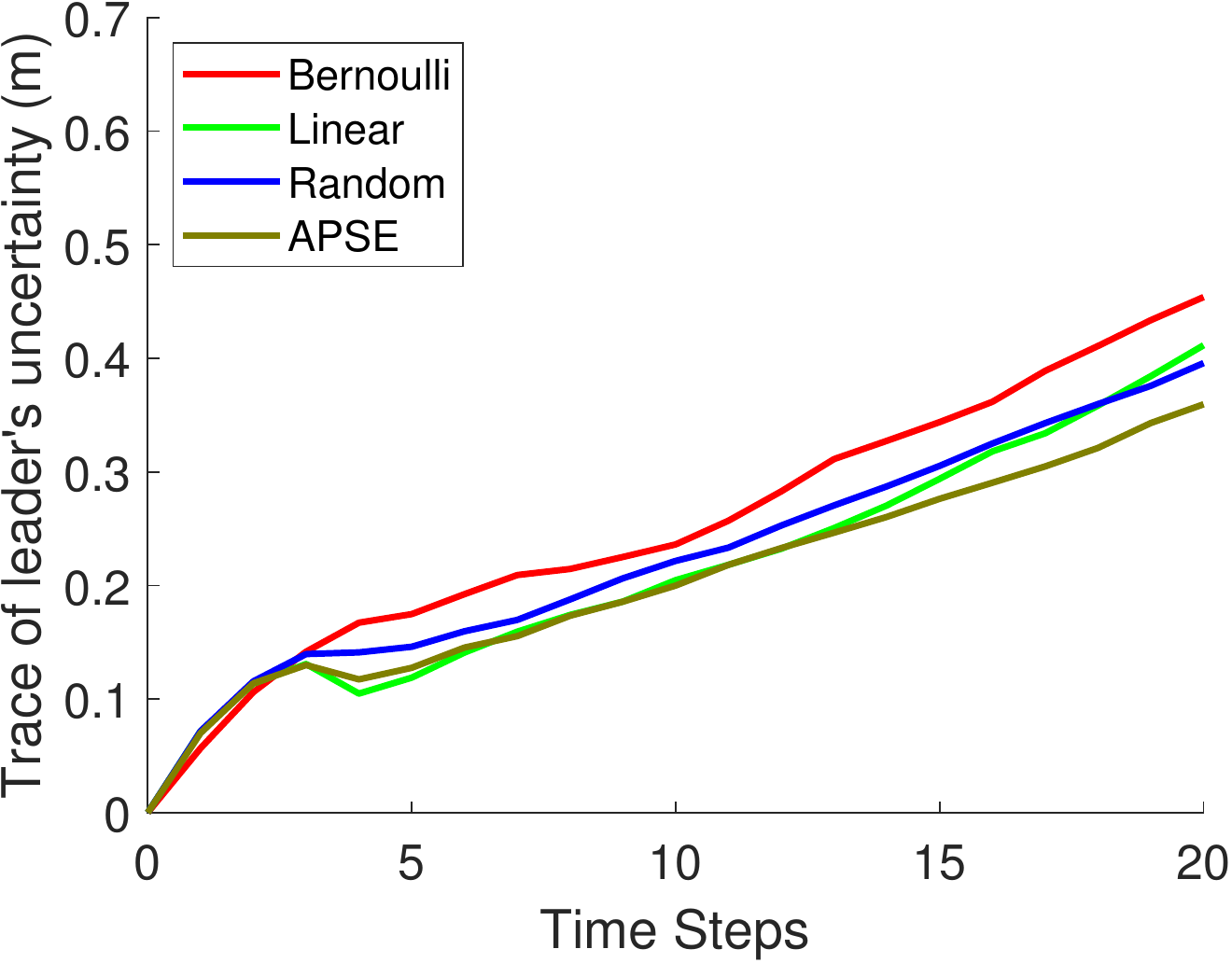}
	\caption{\textcolor{blue}{Leader's mean uncertainties over 50 trials for all four methods.}}
	\label{fig. MeanUncertaintiesOneFigure-CLAP-OS-MDP}
	\vspace{-15pt}
\end{figure}

It is not surprising to see that the distributions of the trace for all four methods all keep increasing and diverging. There is no fixed location source and therefore the absolute drift of the overall system cannot be corrected. More theoretical results of this observation can be found in \cite{mourikis2006performance}. However, we also observe that our proposed APSE algorithm surpasses all the existing methods i.e. \{Bernoulli, Linear and Random\}. Our proposed APSE algorithm has the slowest localization performance decay and improves over the other methods by \{20.8\%, 12.6\%, 9.2\%\}, respectively. Therefore, we conclude that a more accurate probability of future connection can indeed help improve the performance of active planning in noisy environments.  

\color{black}

\subsection{CLAP \blue{with generalized belief space (GBS)}}

\blue{We investigate a more realistic MRS planning scenario based on a patrolling mission and allow a longer planning horizon.} Here, two mobile robots operate in a noisy and GNSS-limited environment, of which robot $L$ is the leader robot responsible for the high-level tasks that are assigned by a human operator. The follower robot $F$ is scheduled to help the leader reduce its localization uncertainty. \blue{The follower serves} as a relay node that actively and optimally generates extra communication and observation links when the leader is out of the anchor's coverage area. \blue{The single anchor in the scene knows its exact position and can communicate with and observe nodes within its coverage area.} The initial configuration is illustrated in Fig.~\ref{fig. CLAPconfigureation}, where the leader is executing a cruise or monitoring mission whose predefined trajectory is completely out of the communication and observation range of the anchor. Through a relay node, the leader is virtually connected to the anchor by the relay link as demonstrated by the green dashed lines in Fig.~\ref{fig. CLAPconfigureation}. The quality of such a virtual connection depends on the relative position of the measurement graph, which can be controlled by optimizing the follower's trajectory. 

\begin{figure}[htbp]
	\centering
	\includegraphics[width=0.45\textwidth]{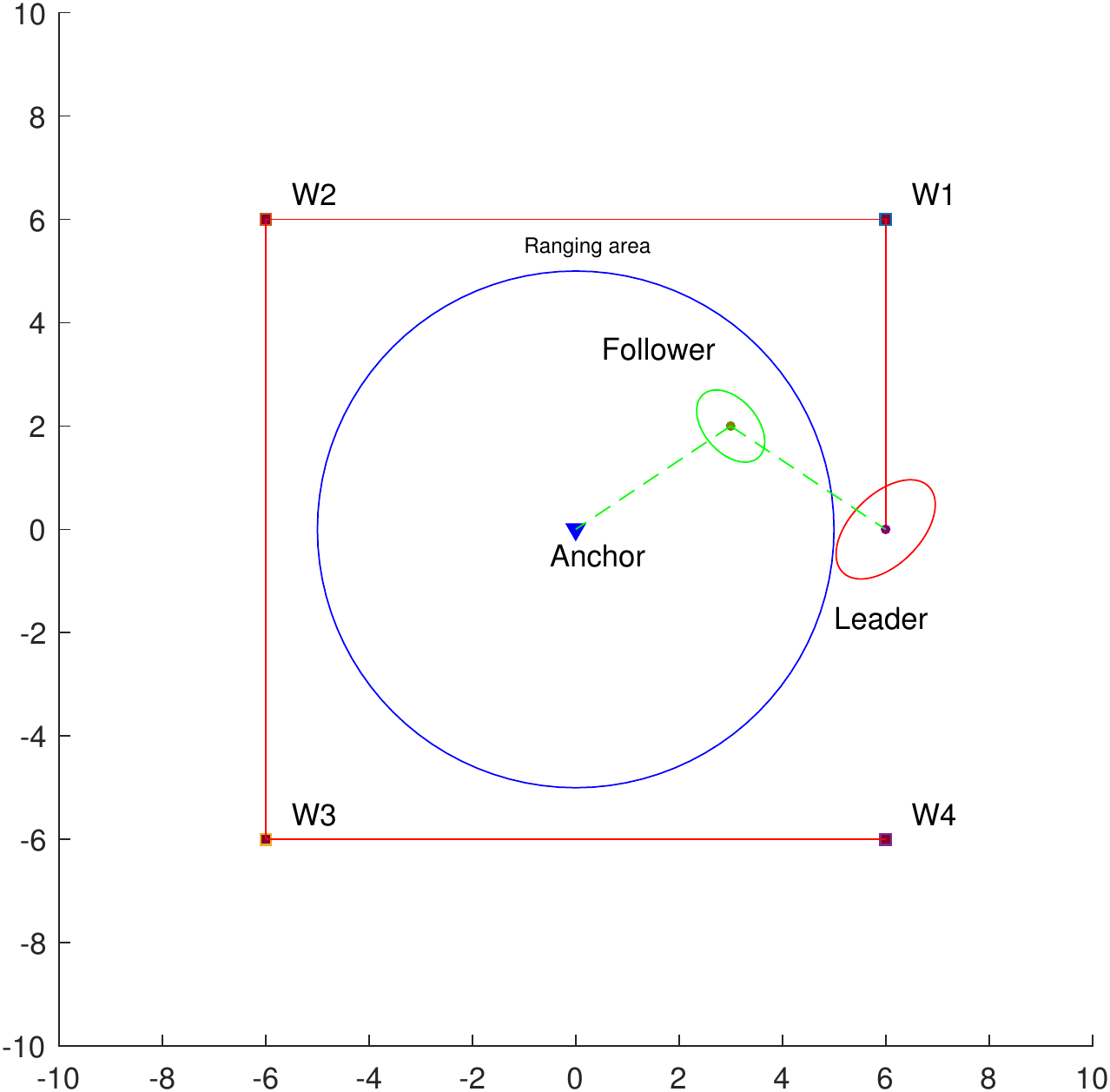}
	\caption{Initial configurations of the CLAP problem. 
	}
	\label{fig. CLAPconfigureation}
\end{figure}

\blue{The motion and measurement models applied in this scenario are the same as the previous simulation experiment}. The planning horizon is set to 3s look-ahead. Denoting $\tilde{\vect{p}}_L^l$ and $\vect{\Sigma}_{\tilde{\vect{p}}_L^l}$ as the predicted mean and covariance of the leader robot while $\tilde{\vect{p}}_F^l$ and $\vect{\Sigma}_{\tilde{\vect{p}}_F^l}$ as those of the follower, then the immediate cost in the objective function is defined as:
\begin{equation}
	c^l \left(  b(\vect{X}^{k+l}, \vect{U}^{k+l-1} \right) = \omega_1 trace(\vect{\Sigma}_{\vect{\tilde{p}}_L^{l}}) + \omega_2 trace(\vect{\Sigma}_{\vect{\tilde{p}}_F^{l}}).
\end{equation}

The initial positions are $\vect{p}^0_L = [6,0]^T$ and $\vect{p}_F^0 = [3,2]^T$. The anchor is located at the origin. Initially there is a connection between the follower and anchor. 
Both robots are initialized with some localization uncertainty.
The leader's high-level trajectory is defined by four sequential waypoints $W_1 = [6,6]^T, W_2 = [-6,6]^T, W_3 = [-6,-6]^T, W_4 = [6,-6]^T$. We calculate the leader's control input by directly heading to its destination waypoint, which switches to next one once $d_{L\rightarrow W_j} = ||\bar{\vect{p}}_L - W_j|| < 0.2m$, where $\vect{\bar{p}}_L$ is the leader's estimated position. The maximum moving distance at each time step for leader and follower are 0.5m and 0.7m, respectively. The gains in the objective function are chosen as $\omega_1=9$ and $\omega_2=1$. \blue{The GBS proposed in \cite{indelman2015planning} is utilized as the active planning framework.}

\begin{figure*}
	\centering
	\begin{subfigure}[b]{0.32\textwidth}
		\begin{subfigure}[b]{1\textwidth}
			\centering
			\includegraphics[width=\textwidth]{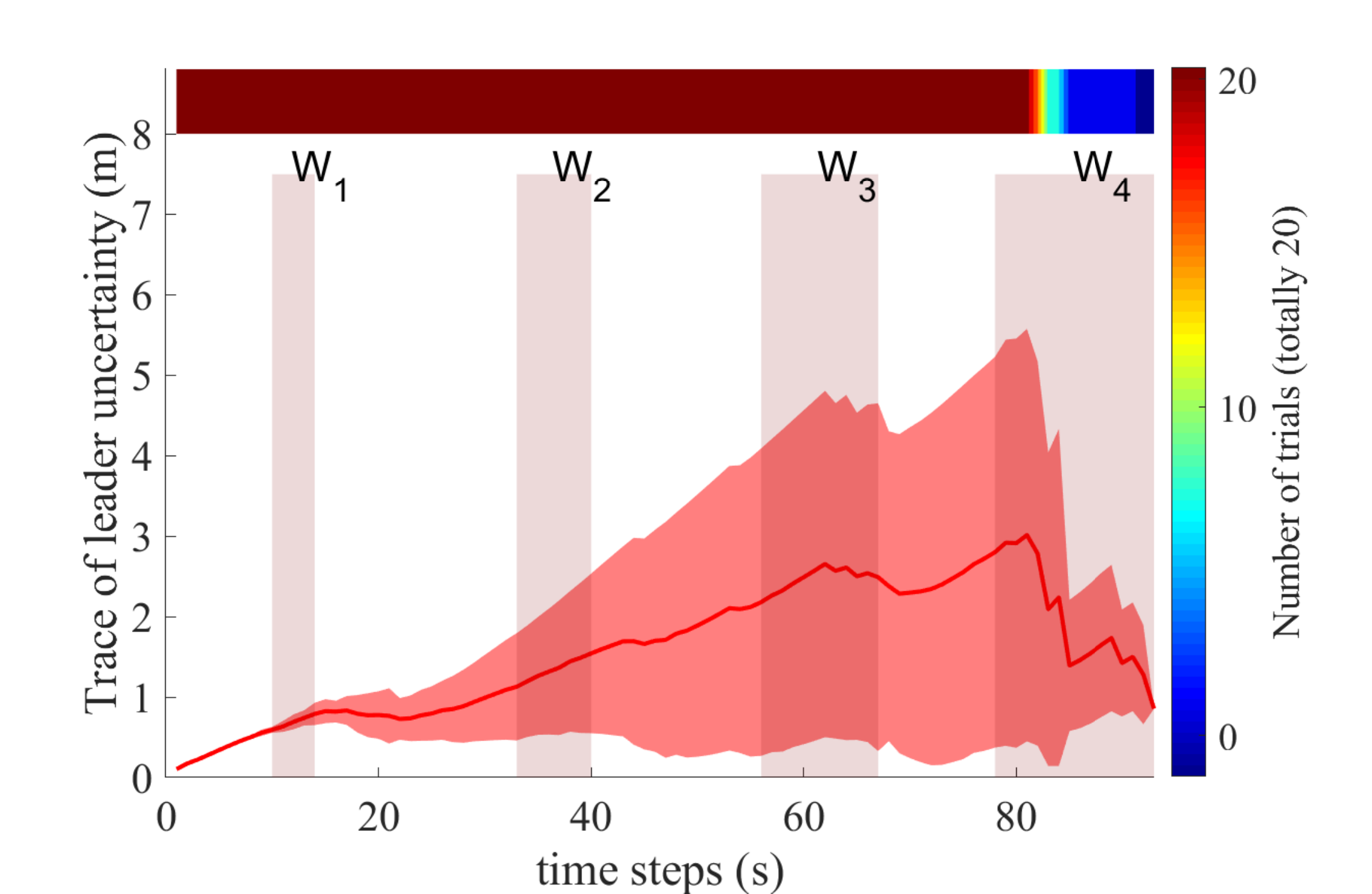}
			\caption{Bernoulli}
			\label{fig. Bernoulli model of CLAP problem distribution over 50 steps}
		\end{subfigure}
		\null \hfill \\
		\begin{subfigure}[b]{1\textwidth}
			\centering
			\includegraphics[width=\textwidth]{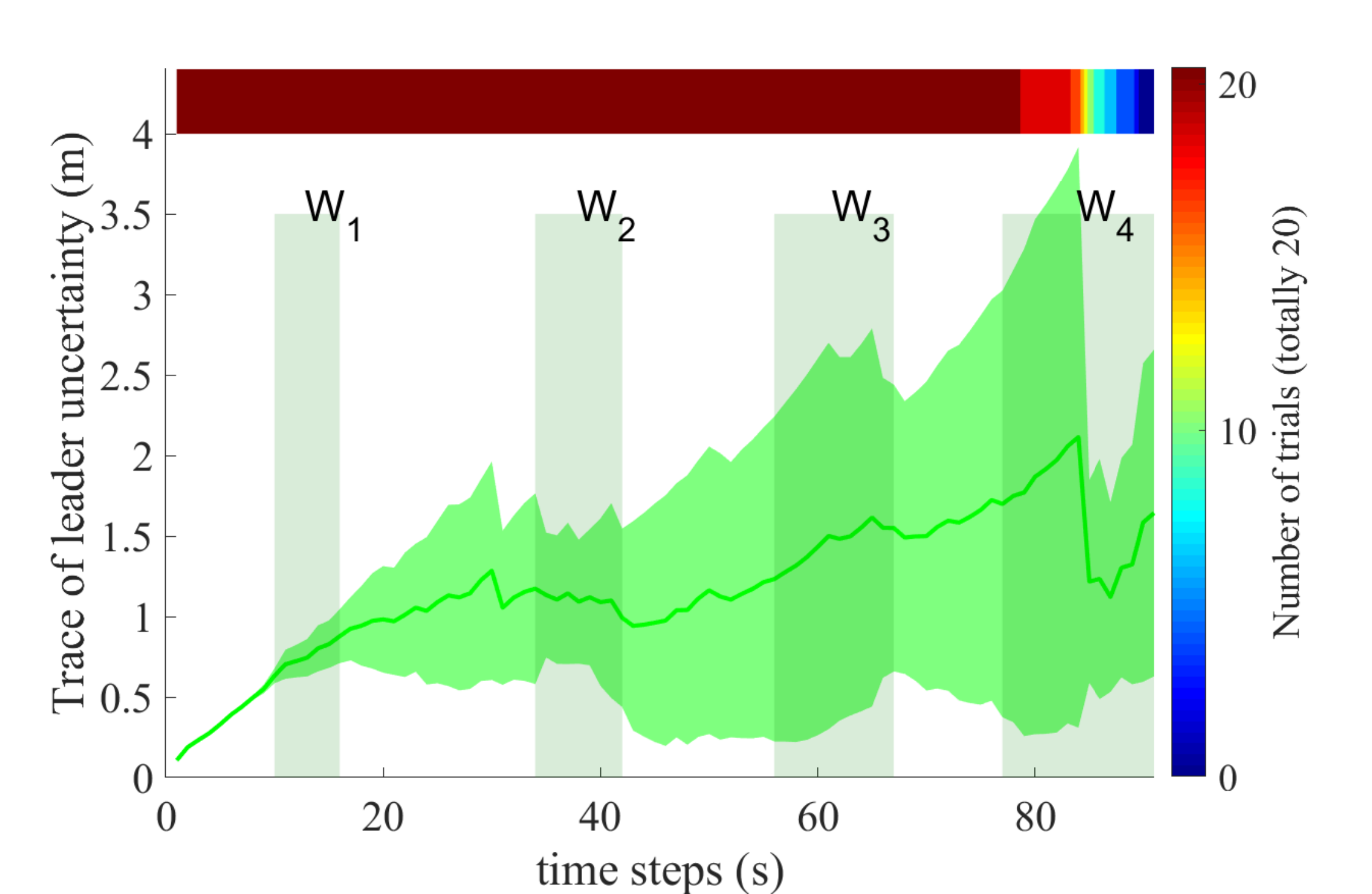}
			\caption{Linear}
			\label{fig. Linear model of CLAP problem distribution over 50 steps}
		\end{subfigure}
	\end{subfigure}
	\hfill
	\begin{subfigure}[b]{0.32\textwidth}
		\begin{subfigure}[b]{1\textwidth}
			\centering
			\includegraphics[width=\textwidth]{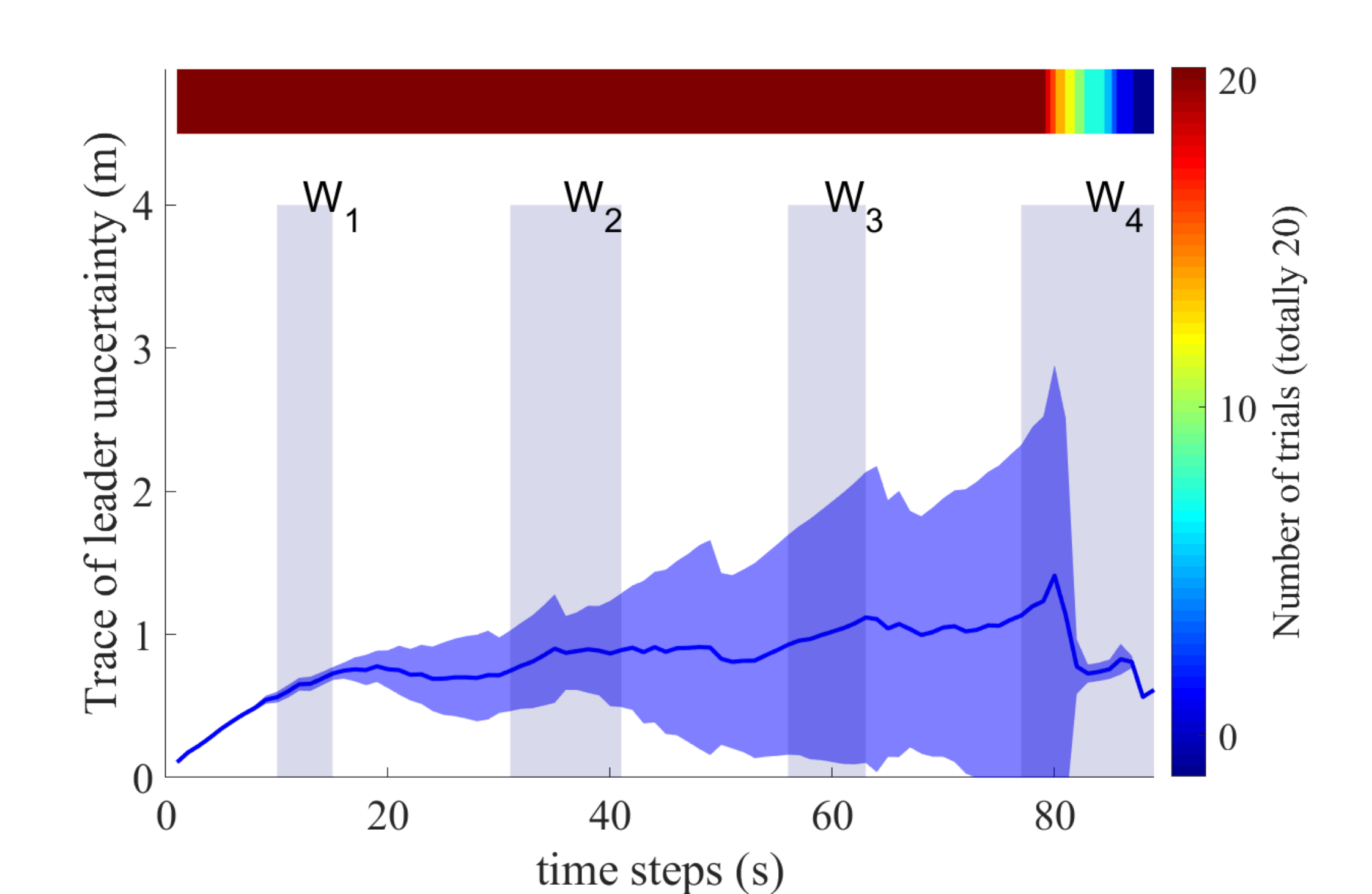}
			\caption{Random with $dim=10^4$}
			\label{fig. Random sample model of CCLAP problem distribution over 50 steps}
		\end{subfigure}
		\null \hfill \\
		\begin{subfigure}[b]{1\textwidth}
			\centering
			\includegraphics[width=\textwidth]{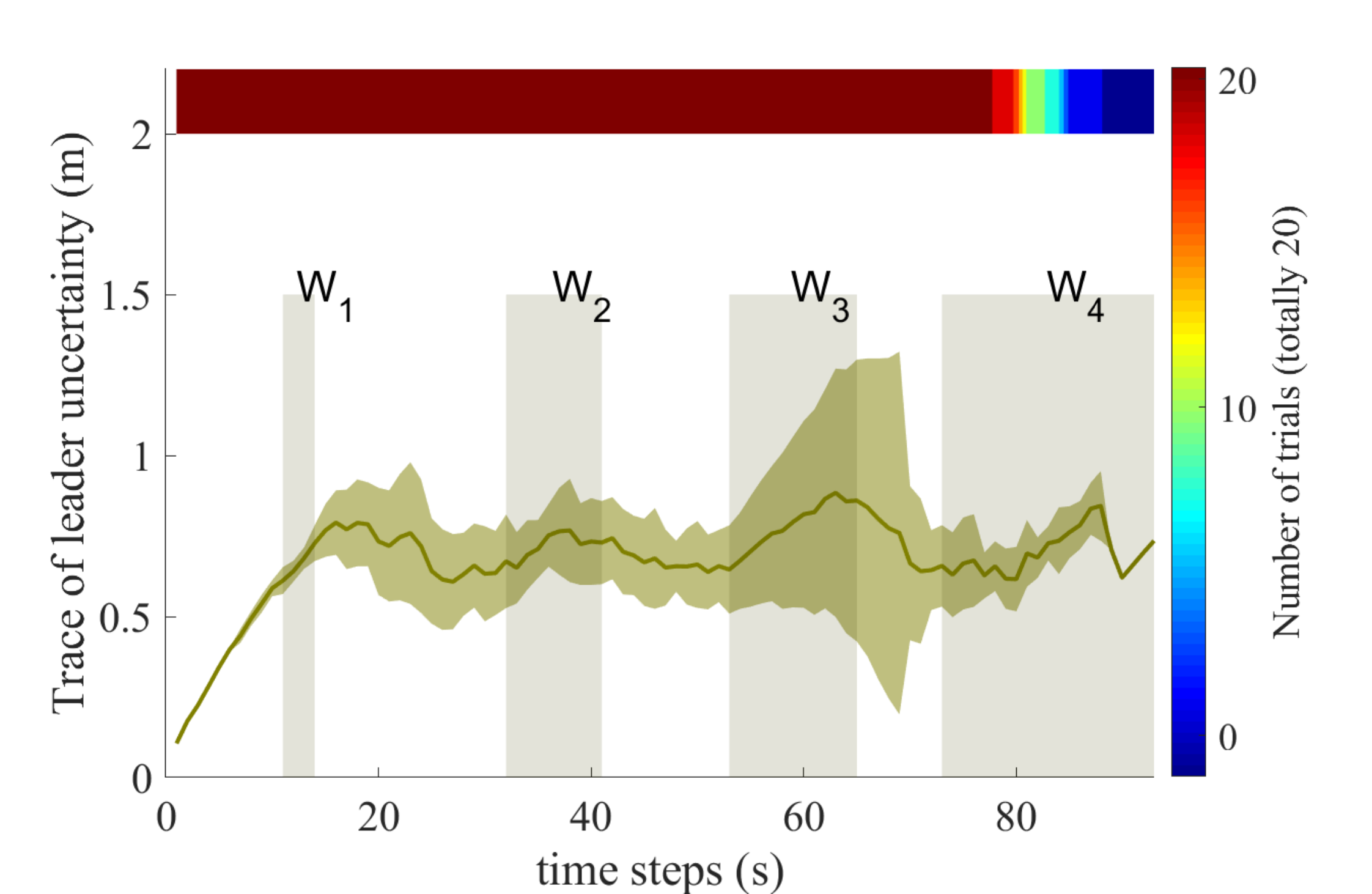}
			\caption{APSE}
			\label{fig. proposed method of CLAP problem distribution over 50 steps}
		\end{subfigure}
	\end{subfigure}
	\hfill
	\begin{subfigure}[b]{0.30\textwidth}
		\begin{subfigure}[b]{1\textwidth}
			\centering
			\includegraphics[width=\textwidth]{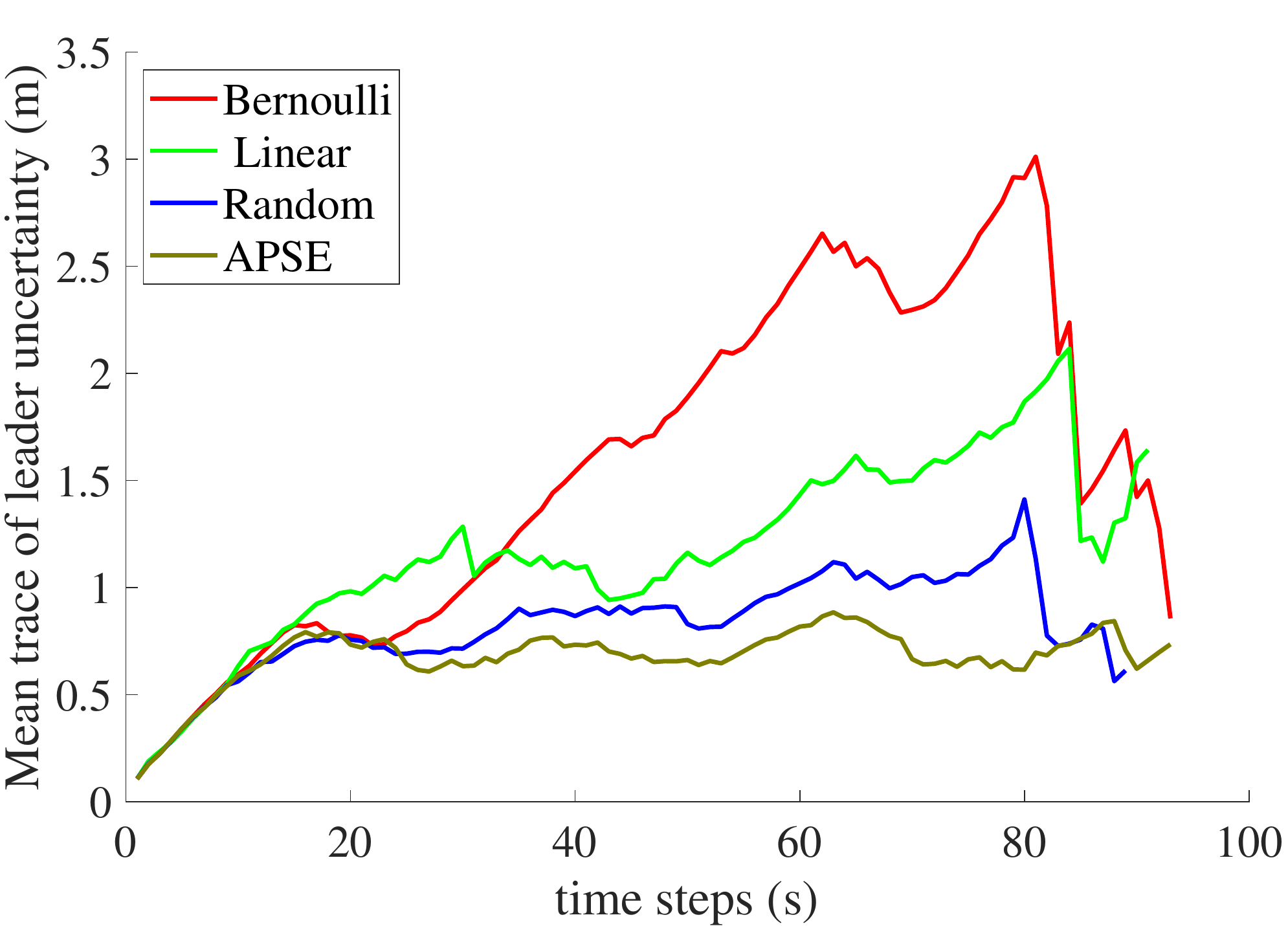}
			\caption{Leader's mean uncertainty}
			\label{fig. CLAPMeanUncertaintyInoneFigure}
		\end{subfigure}
		\null \hfill \\
		\begin{subfigure}[b]{1\textwidth}
			\centering
			\includegraphics[width=\textwidth]{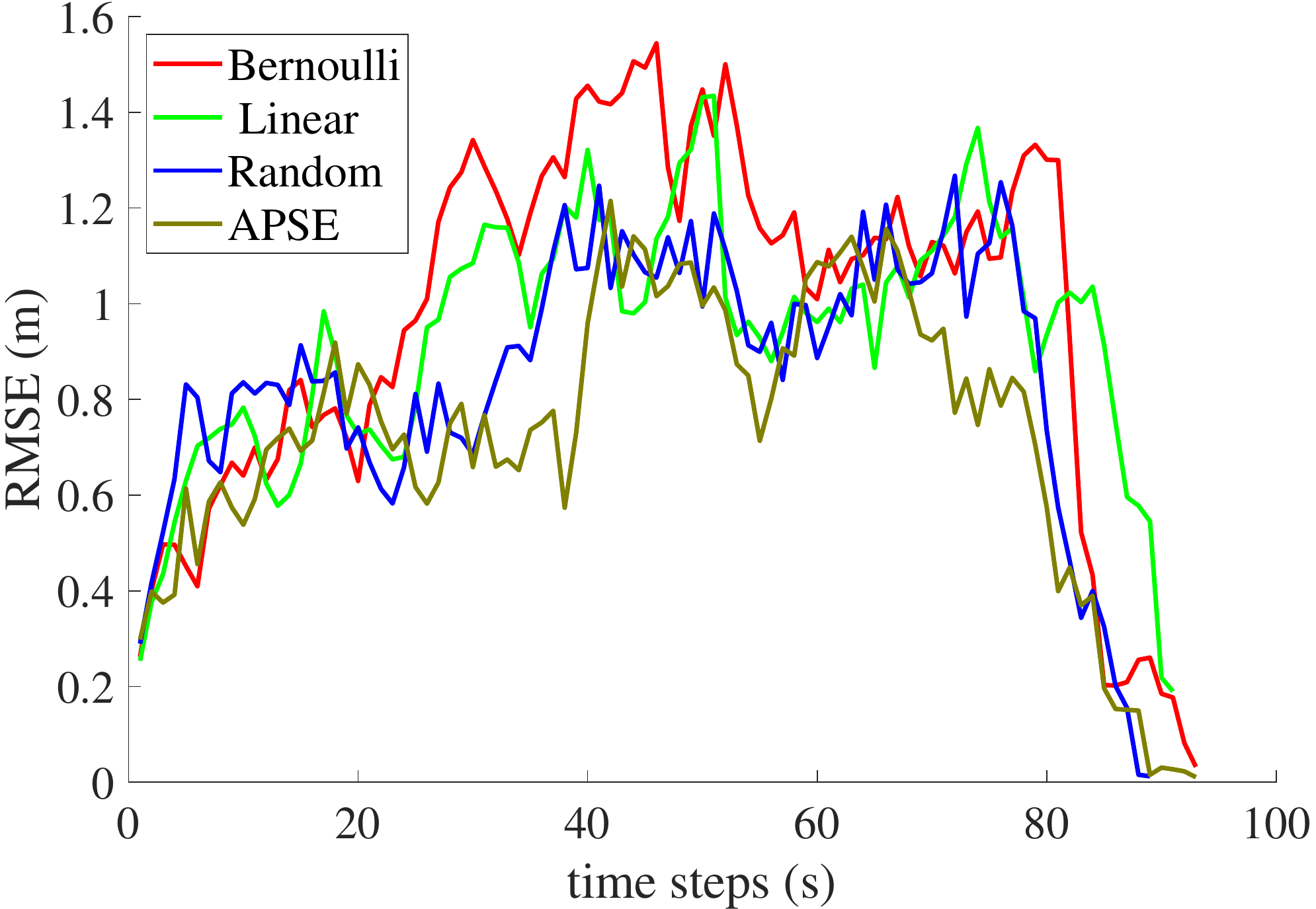}
			\caption{\blue{Leader's mean localization deviation}}
		\label{fig. CLAPMeanDeviation}
		\end{subfigure}
	\end{subfigure} 
	\caption{(a)-(d) Statistical distribution of the leader's localization uncertainty over 20 trials for CLAP with GBS. The vertical shaded bars marked by $W_1$-$W_4$ indicate the distribution of times that leader reaches that waypoint. The colormap shows how many trials exist across the time line. Note that they have different scales on their axes. (e) The leader's mean localization uncertainty for four methods in one figure. (f) The mean RMSE of the leader's localization deviation.}
	\label{fig. CLAPStaisticDistributionUncertainty}
	\vspace{-10pt}
\end{figure*}

Statistical results are presented in Fig.~\ref{fig. CLAPStaisticDistributionUncertainty}\textcolor{red}{a-d} showing the distribution of the leader's localization uncertainty over 20 trials. Notice that the total run time for every trial differs since different control strategies are executed due to varying localization uncertainties. The lighter vertical bars in each distribution show the time ranges when the leader reaches each waypoint. The horizontal colored bar along the top demonstrates the number of active trials at the corresponding time step. 
\begin{table*}[tbp] 
    \vspace{10pt}
	\centering	
	\caption{Statistical characteristics over 20 trials.}
	\begin{tabular}{ccccc}
		\hline
		\multirow{2}{*}{\begin{tabular}[c]{@{}c@{}}Performance metrics\end{tabular}} & \multicolumn{4}{c}{Methods}               \\ \cline{2-5} 
		& Bernoulli & Linear  & Random   & APSE             \\ \hline
		mm-RMSE (m)     & 1.0669    & 1.0381  & 0.9119  & \textbf{0.8079}  \\
		mm-Trace (m)    & 1.5651    & 1.1715  & 0.8208  & \textbf{0.6752}  \\
		\blue{m-TotalCon}        & \blue{41.85 $\pm$ 23.47}      & \blue{44.70 $\pm$ 14.09}     & \blue{66.85 $\pm$ 20.07}    & \blue{\textbf{75.85 $\pm$ 8.85}}    \\
		m-TotalTra (m)  &\blue{47.1090 $\pm$ 1.9839} & \blue{46.9427 $\pm$ 2.4161} & \blue{46.6451 $\pm$ 1.6078}& \textbf{\blue{45.7925 $\pm$ 1.5493}} \\ \hline
	\end{tabular}
	\vspace{-10pt}
	\label{Tab. processedResults}
\end{table*} 

We further collect the processed characteristics for all four methods into one plot to make a clearer comparison as shown in Fig.~\ref{fig. CLAPMeanUncertaintyInoneFigure} and Fig.~\ref{fig. CLAPMeanDeviation}. Here two performance metrics are considered, the evolution of the trace of the leader's uncertainty and the mean RMSE of the leader's localization error (deviation between the estimated and real position). More statistical results are computed and compared in Table~\ref{Tab. processedResults}. The performance metrics considered here are:

1) mm-RMSE: the leader's localization RMSE when taking the mean over all time steps and over all trials. 

2) mm-Trace: trace of the leader's covariance matrix when taking the mean over all time steps and over all trials. 

\blue{3) m-TotalCon: mean over 20 trials of the total number of the relay connections transmitting location information from anchor to leader through follower in each trial. }

4) m-TotalTra: mean distance of the leader's real trajectory for each trial. The distance of the nominal trajectory assuming perfect knowledge of the leader's position is 42m.

By comparing Fig.~\ref{fig. CLAPMeanUncertaintyInoneFigure} and Fig.~\ref{fig. CLAPMeanDeviation}, we can draw two main conclusions, 

1) the methods \{Linear, Random, APSE\} under probabilistic connection perform better than the traditional approach, Bernoulli, under a deterministic connection, 

2) the localization uncertainty becomes more convergent and its mean uncertainty is smaller if a more accurate algorithm, such as APSE, is used to calculate the connection probability. 

From Table~\ref{Tab. processedResults}, an evident conclusion that can be directly derived is that our proposed APSE method surpasses all other approaches \{Bernoulli, Linear, Random\}. The improvement in performance of mm-RMSE for APSE is \{24.28\%, 22.18\%, 11.40\%\} when compared with the other three methods. As for the mm-Trace, its reduction rates are respectively \{56.86\%, 42.36\%, 17.74\%\}. In addition, the comparative increase in the \blue{m-TotalCon} is \{81.24\%, 75.73\%, 13.47\%\}.

Regarding the traveled distance, a more meaningful way is to compare the extra distance, computed by subtracting the nominal shortest trajectory from the executed one. Intuitively, the extra travel distance is introduced due to the existence of uncertainties and hence it can reflect how uncertainties are accumulated across the evolution. As the leader's nominal travel distance is 42m, we say that by applying the underlying four active planning strategies, the extra distances are \{5.1090, 4.9427, 4.6451, 3.7925\}m. Therefore, our proposed APSE achieves a \{25.76\%, 23.27\%, 18.35\%\} reduction on the distance traveled.  


\section{Conclusions and future works}

This note addressed the algorithmic challenges in active planning problems that arise when the future observation/communication connection is unknown at the time of planning. 
The most important contribution is an improved algorithm---APSE---for computing the exact probability of a future connection considering the disk communication and observation model based on available information at the current time and given the control candidates. Through both theoretical analyses and numerical simulations, three conclusions are made as follows:

1) The APSE can achieve a theoretically guaranteed accuracy under an adaptive selection of the summation degree. 

2) The computational complexity of APSE is at the same magnitude as random sampling methods with their number of samples between $10^3$ and $10^4$.
 
3) We have verified the idea that the performance of active planning can indeed be further improved by accurately predicting the distribution of future unknown variables.


However, despite the aforementioned achievements, the APSE algorithm still suffers from round-off error from the computation platform. This error may significantly impact its accuracy if the communication and observation threshold $\rho$ is too large. Though the TRACE algorithm can be used inversely to deal with this problem, the critical requirement on the eigenvalues of the covariance still restricts its performance. A direct way to address this problem is by applying packages with higher precision of quantities such as the Multiple Precision Toolbox for the MATLAB platform \url{https://www.advanpix.com/}. Another roadmap 
may refer to the variants of a standard chi-square distribution where a look-up table is often used to search for the probability and interpolation needed for the values missing in the table. Either Patnaik's approximation \cite{patnaik1949non} or Pearson's approximation  \cite{imhof1961computing} may be an underlying solution.




In addition, more investigations should be conducted to reason about the independence relationships depicted in \eqref{eq. spatiallyIndependenceFutureC}-\eqref{eq. temporallyIndependenceFutureC} regarding future connectivities. Though spatial independence can be established by the separated measurement process, temporal independence should be further looked into, especially in terms of the applicability of the Markov assumption.

Moving on, another important topic is to 
extend our analysis from 3 DOF to 6 DOF. The key challenge in such an extension is how to build a mathematical measurement model and reason about the correlation between the limits of the measurement model caused by future positions and attitudes.

Lastly, more complex environments could be taken into further consideration. Here we only assume an empty world with an ideal disk communication and observation model. However, real deployment of the active planning problem will likely have to deal with different configurations, like physical obstructions in the environment, the blocking of data transmission or radio measurement signals, the separation of communication and observation devices, various types of sensor noise, 
faults, failures and so on.  

\appendices
\section{} \label{subsec proof d_k}
Recalling the definition of the coefficient $d_w$ \eqref{eq. computation of the coefficient d_k}, we know the subtraction of two consecutive terms,  $d_{w+1}$ and $d_w$ is
\begin{equation}
	\begin{aligned}
		&d_{w+1} - d_w  \\
		&= \frac{(2\lambda_j)^{-w}}{2}  \sum_{j=1}^2 \left\lbrace  wb_j^2\left[ 1 - (2\lambda_j)^{-1} \right] + (1-b_j^2)(2\lambda_j)^{-1} - 1\right\rbrace.
	\end{aligned}
\end{equation}

As we have $\left[ 1 - (2\lambda_j)^{-1} \right]>0$, then if 
\begin{equation} \label{eq. condition of k, 1}
	\begin{aligned}
		w & \ge \frac{1}{b_j^2} + \frac{1}{2\lambda_j - 1}
	\end{aligned},  \forall j \in \{ 1,2 \},
\end{equation}
the subtraction has
$$d_{w+1} - d_{w} > 0.$$
Moreover, it's clear that $d_w\le 0, \forall w \in \mathbb{N}^+$ if
\begin{equation} \label{eq. condition of k, 2}
	w \ge \frac{1}{b_j^2},  \forall j \in \{ 1,2 \}. 
\end{equation}

From \eqref{eq. computation of the coefficient d_k}, the limit of $d_w$ is 
$$		\mathop{\lim}_{w\rightarrow \infty} d_w =  \mathop{\lim}_{k\rightarrow \infty} \sum_{j=1}^p \left( \frac{1}{ (2\lambda_j
	)^{w} } - b_j^2\frac{w}{ (2\lambda_j
	)^{w} } \right),\forall j \in {1,2}.$$
Then according to the well known L'Hopital rule, if $2\lambda_j > 1$, the right-hand side of the limit is exactly zero. 

In conclusion, we derive that if the given distribution of relative distance between two nodes is well-defined i.e. $\lambda>\frac{1}{2}$, the output of $|d_w|$ will keep decreasing when the degree $w$ grows large enough to realize the conditions in \eqref{eq. condition of k, 1}, as the condition in \eqref{eq. condition of k, 2} is included in \eqref{eq. condition of k, 1}. Therefore we know that there exists a maximum $d^u > 0$ so that $|d_w| < d^u, \forall w \in \mathbb{R}^{+}$. 

This ends the proof of Corollary \ref{Corollary of d_k}.

\section{}  \label{subsec proof c_k}
Let expand two consecutive terms $\tilde{c}_{w+1}$ and $\tilde{c}_w$, 	
\begin{equation} \label{tilde expansion of c_k}
	k\tilde{c}_w = \left( \tilde{c}_0\tilde{d}_w + \tilde{c}_1\tilde{d}_{w-1} + ... + \tilde{c}_{w-1}\tilde{d}_1\right),
\end{equation}
and
\begin{equation} \label{tilde expansion of c_k+1}
	(w+1) \tilde{c}_{w+1} = \left( \tilde{c}_0\tilde{d}_{w+1} + \tilde{c}_1\tilde{d}_{w} + ... + \tilde{c}_{w}\tilde{d}_1\right).
\end{equation}
Adding $\tilde{c}_w$ to both sides of \eqref{tilde expansion of c_k} derives
\begin{equation} \label{tilde expansion of c_k after adding tilde c_k}
	(w+1)\tilde{c}_w = \left( \tilde{c}_0\tilde{d}_w + \tilde{c}_1\tilde{d}_{w-1} + ... + \tilde{c}_{w-1}\tilde{d}_1 + \tilde{c}_w \right).
\end{equation}
Taking the subtraction of \eqref{tilde expansion of c_k after adding tilde c_k} and \eqref{tilde expansion of c_k+1} has
\begin{equation}
	\begin{aligned}
		&(w+1)\left( \tilde{c}_{w+1} - \tilde{c}_{w} \right) \\
		&= \tilde{c}_0 (\tilde{d}_{w+1} - \tilde{d}_{w} ) + ... + \tilde{c}_{w-1} (\tilde{d}_{2} - \tilde{d}_{1} )  +\tilde{c}_w (\tilde{d}_{1} - 1 ).  
	\end{aligned}		
\end{equation}
Since $\tilde{d}_{w+1} = \tilde{d}_{w} = ... = \tilde{d}_{1} = d^u$, we can further get 
\begin{equation} 
	\begin{aligned}
		(w+1)&\left( \tilde{c}_{w+1} - \tilde{c}_{w} \right) = (d^u - 1)\tilde{c}_w \\
		\Rightarrow \quad & \tilde{c}_{w+1} = \frac{d^u+w}{w+1} \tilde{c}_{w}. 
	\end{aligned}		
\end{equation}
Recalling that $\tilde{c}_0 = c_0 > 0$, then we know $\tilde{c}_w > 0,\forall w \in \mathbb{N}^+$. 

Next, we need to show that $|c_w| \le |\tilde{c}_w|,\forall w \in \mathbb{N}^+$. It has been already shown that $c_0 = \tilde{c}_0 \Rightarrow |c_0| \le |\tilde{c}_0|$. Hence, we have 
\begin{equation}
	|c_1| = |c_0d_1| \le |c_0| |d^u| = |\tilde{c}_1|.
\end{equation}
Similarly, we suppose that $|c_i| \le |\tilde{c}_i|$ when $i=w-1$. Then we can derive, 
\begin{equation} 
	\begin{aligned}
		|c_w| &= |\frac{1}{w} \left( c_0d_w + c_1d_{w-1} + ... + c_{w-1}d_1 \right)| \\
		& \le \frac{1}{w} \left( |\tilde{c}_0||d^u| + |\tilde{c}_1||d^u| + ... + |\tilde{c}_{w-1}||d^u| \right) = |\tilde{c}_w| \\
	\end{aligned}	
\end{equation}
Note that the above equation is held by the fact that two new series $\tilde{c}_w$ and $\tilde{d}_w$ are all positive. 

Similarly, we can also conclude the same result as shown above when $i > w - 1$. Therefore, the conclusion in \eqref{eq. abstract relationship of c_k and tilde_c_k} can be drawn. This ends the proof of Corollary \ref{corollary of tilde_c_k}.

\section{}  \label{subsec proof e_k^D}

A similar series $\tilde{e}_w^D$ can be built using $\tilde{c}_w$ according to the definition of $e_w^D$ in \eqref{eq. new series e_k^D, D = 1} and \eqref{eq. new series e_k^D, D>2}.
Therefore, the relationship between $c_w$ and $\tilde{c}_w$ in \eqref{eq. abstract relationship of c_k and tilde_c_k} can be equally extended to $e_w^D$ and $\tilde{e}_w^D$, whereby we know that
$$|e_w^D| \le |\tilde{e}_w^D|,\forall w \in \mathbb{N}^+.$$	

According to \eqref{eq. recurrsive equation of c_k+1 and c_k given a equal d_k}, the following extension can be made when $w \ge D-1$, 
\begin{equation} \label{eq. tilde_e_k^D and tilde_e_{w+1}^{D}}
	\tilde{e}_{w+1}^D = \frac{d^u+w}{w+1} \frac{\prod_{j=1}^{D}(w+2-j)}{\prod_{j=1}^{D}(w+3-j)}\tilde{e}_{w}^D.		
\end{equation}
So, if $D = 1$, we derive,
\begin{equation} \label{eq. tilde_e_k^D if D=1}
	\begin{aligned}
		\tilde{e}_{w+1}^1 &= \frac{d^u+w}{w+2} \tilde{e}_{w}^1 < \tilde{e}_{w}^1, \quad \forall w \ge 0.\\
	\end{aligned}		
\end{equation}
This means $\tilde{e}_w^1$ gets smaller when $w$ tends to infinity.

Otherwise, when $D$ is general and $D \ge 2$, we have
\begin{equation} \label{eq. tilde_e_k^D if D ge 2}
	\begin{aligned}
		\tilde{e}_{w+1}^D &= \frac{d^u+w}{w+1} \frac{(w+1)(w+2-D)\prod_{j=2}^{D-1}(w+2-j)}{(w+2)(w+1)\prod_{j=3}^{D}(w+3-j)}\tilde{e}_{w}^D  \\
		& \le \frac{ (d^u+w)(w+2-D) }{ (w+2)(w+1) } \frac{\prod_{j=2}^{D-1}(w+2-j)}{\prod_{j=2}^{D-1}(w+2-j)}\tilde{e}_{w}^D \\
		& = \frac{ (d^u+w)(w+2-D) }{ (w+2)(w+1) }\tilde{e}_{w}^D\quad \forall w \ge D-1.
	\end{aligned}		
\end{equation}
A simple calculation derives, 
$$
\begin{aligned}
	&(d^u+w)(w+2-D) - (w+2)(w+1) \\
	&= (d^u-D-1)w +d^u(2-D)-2.
\end{aligned}
$$
As we know that $0 \le d^u \le D$ and $D \ge 2$, hence $(d^u-D-1) < 0$ and $d^u(2-D)-2 < 0$. As a result, $\tilde{e}_{w}^D$ is also decreasing when $w\ge D-1$, i.e. $	\tilde{e}_{w+1}^D < \tilde{e}_{w}^D, \quad \forall w \ge D-1.$
Therefore, the limit of $\tilde{e}^D_k$ is bounded,
$\mathop{\lim}_{w \rightarrow \infty} \tilde{e}_{w}^D = \mathit{Const.}$

Since $\tilde{e}_w^D$ is the envelope of $e_w^D$, then the limit of $|e_w^D|$ is also bounded, which gives the result in \eqref{eq. limitation of e_k^D}. The maximum of $\tilde{e}_w^1$ is obviously the first element $e_0^1 = \tilde{c}_0$. As for $D \ge 2$, we recall the relationship of $\tilde{c}_w$ and $\tilde{c}_{w+1}$ in \eqref{eq. recurrsive equation of c_k+1 and c_k given a equal d_k}. We know that $\tilde{e}_w^D$ is increasing when $w \le D-2$ and the last element $\tilde{e}_{D-2}^D = \tilde{c}_{D-2}$ is the largest value thus far. After that, $\tilde{e}_{w}^D$ will decrease over the increment of $w$, which gives its peak value $\tilde{e}_{D-1}^D = \frac{\tilde{c}_{D-1}}{D!}\le\tilde{c}_{D-1}$ when $w\ge D-1$. Since we have $\tilde{c}_{D-2} < \tilde{c}_{D-1}$, the maximum value of $\tilde{e}_w^D$ is therefore $\tilde{c}_{D-1}$. Consequently, it is also the upper boundary of $|e_w^D|$. This ends the proof of Corollary \ref{corollary of e_k^D}.


%


%
%

\ifCLASSOPTIONcaptionsoff
  \newpage
\fi



%
%
%
\bibliographystyle{IEEEtran}
\bibliography{IEEEabrv,mybibfile}

%




\begin{IEEEbiography}[{\includegraphics[width=1in,height=1.25in,clip,keepaspectratio]{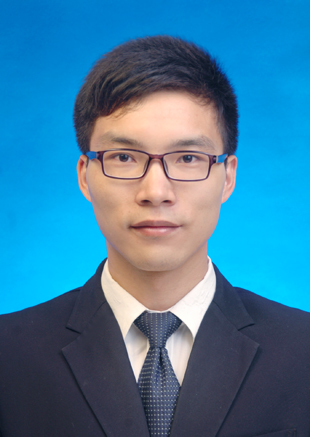}}]{Liang Zhang} Liang Zhang is currently a lecturer in the School of Engineering and Automation, Anhui University, China. His research interests are mainly around the multi-robot or multi-agent system (MAS/MRS), including cooperative control, planning under uncertainty and coverage control. He received his B.Sc. degree in Automation from Shandong University, China in 2015, Ph.D. degree in the School of Astronautics, Harbin Institute of Technology, China in 2021. He was also a visiting Ph.D. student in the Autonomous System Lab, ETH Z{\"u}rich, Switzerland from 2018 to 2019 under the foundation from China Scholarship Council. 
\end{IEEEbiography}

\begin{IEEEbiography}[{\includegraphics[width=1in,height=1.25in,clip,keepaspectratio]{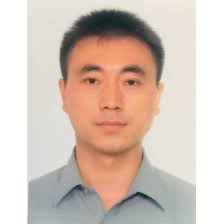}}]{Zexu Zhang}
Zexu Zhang is a full professor at the Harbin Institute of Technology, China, where he is also the director of the Institute of Aircraft Dynamics and Control. His research interests include aircraft autonomous navigation and control, intelligent cooperative perception and autonomous decision-making in drone swarm,  data visualization. He was elected director of the Committee of Space Intelligence of the Chinese Society of Space Research(2021-2025).
\end{IEEEbiography}

\begin{IEEEbiography}[{\includegraphics[width=1in,height=1.25in,clip,keepaspectratio]{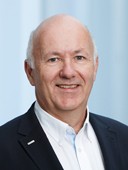}}]{Roland Siegwart}
Roland Siegwart (Fellow, IEEE) is a Professor for autonomous mobile robots with ETH Zurich, Founding Co-Director of the Technology Transfer Center, Wyss Zurich and Board Member of multiple high-tech companies. From 1996 to 2006, he was a Professor with EPFL Lausanne, held visiting positions with Stanford University and NASA Ames and was Vice President of ETH Zurich from 2010 to 2014. His research interest include the design, control, and navigation of flying, and wheeled and walking robots operating in complex and highly dynamical environments.

Prof. Siegwart received the IEEE RAS Pioneer Award and IEEE RAS Inaba Technical Award. He is among the most cited scientist in robots world-wide, Co-Founder of more than half a dozen spin-off companies, and a strong promoter of innovation and entrepreneurship in Switzerland.
\end{IEEEbiography}

\begin{IEEEbiography}[{\includegraphics[width=1in,height=1.25in,clip,keepaspectratio]{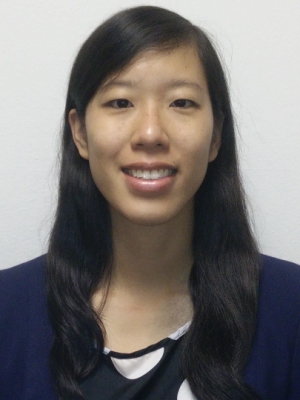}}]{Jen Jen Chung}
Jen Jen Chung (Member, IEEE) is a Senior Researcher in the Autonomous Systems Lab (ASL) at ETH Zürich. Her current research interests include perception and learning for mobile manipulation, algorithms for robot navigation through crowds, informative path planning and adaptive sampling. Prior to working at ASL, Jen Jen was a postdoctoral scholar at Oregon State University researching multiagent learning methods and she completed her Ph.D. on information-based exploration-exploitation strategies for autonomous soaring platforms at the Australian Centre for Field Robotics in the University of Sydney. She received her Ph.D. (2014) and B.E. (2010) from the University of Sydney.

\end{IEEEbiography}




\end{document}